\newcommand{\vct}[1]{\boldsymbol{#1}} 
\newcommand{\mat}[1]{\boldsymbol{#1}} 
\newcommand{\T}{^{\textrm T}} 
\DeclareMathOperator*{\argmax}{arg\,max}
\DeclareMathOperator*{\argmin}{arg\,min}
\newcommand{\eat}[1]{}
\theoremstyle{plain}
\newtheorem{lemma}{Lemma}
\newtheorem{theorem}{Theorem}
\newtheorem{prop}{Proposition}
\newtheorem{cor}{Corollary}
\newtheorem{rem}{Remark}
\newcommand{\innerp}[1]{\langle #1 \rangle}
\DeclareMathOperator{\diam}{diam}
\DeclareMathOperator{\diag}{diag}
\DeclareMathOperator*{\conv}{conv}
\DeclareMathOperator*{\expec}{\mathbb{E}}
\newcommand{\mldiag}{\textsc{diag}\xspace}
\newcommand{\hdsl}{\textsc{hdsl}\xspace}
\newcommand{\ident}{\textsc{dot}\xspace}
\newcommand{\pcaml}{\textsc{pca+oasis}\xspace}
\newcommand{\rpml}{\textsc{rp+oasis}\xspace}
\newcommand{\svm}{\textsc{svm}\xspace}
\let\originalparagraph\paragraph
\renewcommand{\paragraph}[2][.]{\originalparagraph{#2#1}}
\def\arxiv{1}
\begin{document}

\title{Escaping the Curse of Dimensionality in Similarity Learning: Efficient Frank-Wolfe Algorithm and Generalization Bounds}

\author{
Kuan Liu\thanks{Google Inc., USA, \texttt{liukuan@google.com}. Most of this
work was done when the author was affiliated with the Information Sciences
Institute, University of Southern California, USA}~~and Aur\'elien Bellet\thanks{INRIA, \texttt{aurelien.bellet@inria.fr}.}
}

\date{}

\maketitle

\begin{abstract}

Similarity and metric learning provides a principled approach to construct a task-specific similarity from weakly supervised data. However, these methods are subject to the curse of dimensionality: as the number of features grows large, poor generalization is to be expected and training becomes intractable due to high computational and memory costs.
In this paper, we propose a similarity learning method that can efficiently deal with high-dimensional sparse data. This is achieved through a parameterization of similarity functions by convex combinations of sparse rank-one matrices, together with the use of a greedy approximate Frank-Wolfe algorithm which provides an efficient way to control the number of active features. We show that the convergence rate of the algorithm, as well as its time and memory complexity, are independent of the data dimension. We further provide a theoretical justification of our modeling choices through an analysis of the generalization error, which depends logarithmically on the sparsity of the solution rather than on the number of features. Our experiments on datasets with up to one million features demonstrate the ability of our approach to generalize well despite the high dimensionality as well as its superiority compared to several competing methods. 

\end{abstract}


\section{Introduction}



High-dimensional and sparse data are commonly encountered in many applications of machine learning, such as computer vision, bioinformatics, text mining and behavioral targeting. To classify, cluster or rank data points, it is important to be able to compute semantically meaningful similarities between them. However, defining an appropriate similarity measure for a given task is often difficult as only a small and unknown subset of all features are actually relevant. For instance, in drug discovery studies, chemical compounds are typically represented by a large number of sparse features describing their 2D and 3D properties, and only a few of them play in role in determining whether the compound will bind to a particular target receptor \citep{Leach2007a}. In text classification and clustering, a document is often represented as a sparse bag of words, and only a small subset of the dictionary is generally useful to discriminate between documents about different topics. Another example is targeted advertising, where ads are selected based on fine-grained user history \citep{Chen2009a}.


Similarity and metric learning \citep{Bellet2015c} offers principled approaches to construct a task-specific similarity measure by learning it from weakly supervised data, and has been used in many application domains. The main theme in these methods is to learn the parameters of a similarity (or distance) function such that it agrees with task-specific similarity judgments (e.g., of the form ``data point $\vct{x}$ should be more similar to $\vct{y}$ than to $\vct{z}$''). 
To account for correlations between features, similarity and metric learning typically estimates a number of parameters which is quadratic in the data dimension $d$.  When data are high-dimensional, these methods are thus particularly affected by the so-called ``curse of dimensionality'', which manifests itself at both the algorithmic and generalization levels. On the one hand, training the similarity quickly becomes infeasible due to a quadratic or cubic complexity in $d$. In fact, the $O(d^2)$ parameters may not even fit in memory. On the other hand, putting aside the training phase, learning so many parameters would lead to severe overfitting and poor generalization performance (especially for sparse data where some features are rarely observed). Simple workarounds have been used to address this limitation, such as projecting the data into a low-dimensional space before learning the similarity \cite[see e.g.][]{Davis2007,Weinberger2009,Guillaumin2009}. However, such heuristics do not provide satisfactory solutions: they often hurt the performance and make the resulting similarity function difficult to interpret.

In this paper, we propose a novel method to learn a bilinear similarity function $S_{\mat{M}}(\vct{x},\vct{x}') = \vct{x}\T\mat{M}\vct{x}'$ directly in the original high-dimensional space while escaping the curse of dimensionality. This is achieved by combining three ingredients: the sparsity of the data, the parameterization of $\mat{M}$ as a convex combination of rank-one matrices with a special sparsity structure, and an approximate Frank-Wolfe procedure \citep{Frank1956,Jaggi2013} to learn the similarity parameters. 
The resulting algorithm greedily incorporates one pair of features at a time into the learned similarity, providing an efficient way to filter out irrelevant features as well as to guard against overfitting through early stopping. Remarkably, the convergence rate of the algorithm as well as its time and memory complexity are all independent of the dimension $d$. The resulting similarity functions are extremely sparse, which makes them fast to compute and easier to interpret.

We provide strong theoretical and empirical evidence of the usefulness of our approach. On the theory part, we perform a generalization analysis of the solution returned by our algorithm after a given number of iterations. We derive excess risk bounds with respect to the minimizer of the expected risk which confirm that our modeling choices as well as our Frank-Wolfe algorithm and early stopping policy provide effective ways to avoid overfitting in high dimensions. A distinctive feature of the generalization bound we obtain is the adaptivity of its model class complexity term to the actual sparsity of the approximate solution found by our algorithm, again removing the dependence on the dimension $d$.
We also evaluate the proposed approach on several synthetic and real datasets with up to one million features, some of which have a large proportion of irrelevant features. To the best of our knowledge, it is the first time that a full similarity or distance metric is learned directly on such high-dimensional datasets without first reducing dimensionality.
Our experiments show that our approach is able to generalize well despite the high dimensionality, and even to recover the ground truth similarity function when the training similarity judgments are sufficiently informative. Furthermore, our approach clearly outperforms both a diagonal similarity learned in the original space and a full similarity learned in a reduced space (after PCA or random projections). Finally, we show that our similarity functions can be extremely sparse (in the order of {0.0001\%} of nonzero entries), thereby drastically reducing the dimension while also providing an opportunity to analyze the importance of the original features and their pairwise interactions for the problem at hand.

The present work extends a previously published conference paper \citep{Liu2015a} by providing additional technical and experimental results. Firstly, we present a novel generalization analysis which further backs up our approach from a statistical learning point of view. Secondly, we conduct experiments on high-dimensional synthetic data showing that our approach generalizes well as the dimensionality increases and can even accurately recover the ground truth notion of similarity. Finally, we extend the discussion of the related work and provide additional details on algorithms and proofs.

The paper is organized as follows. Section \ref{sRelate} introduces some background and related work on similarity learning and Frank-Wolfe algorithms. Section \ref{sApproach} describes our problem formulation, the proposed algorithm and its analysis. Generalization bounds are established in Section \ref{sGen}. Finally, Section \ref{sExp} describes our experimental results, and we conclude in Section \ref{sConclude}.


\section{Background and Related Work}
\label{sRelate}

In this section, we review some background and related work in metric and similarity learning (Section~\ref{sec:related_metric}) and the Frank-Wolfe algorithm (Section~\ref{sec:related_fw}).

\subsection{Metric and Similarity Learning}
\label{sec:related_metric}


Metric and similarity learning has attracted a lot of interest over the past ten years. The great majority of work has focused on learning either a Mahalanobis distance $d_{\mat{M}}(\vct{x},\vct{x}') = (\vct{x}-\vct{x}')\T\mat{M}(\vct{x}-\vct{x}')$ where $\mat{M}$ is a symmetric positive semi-definite (PSD) matrix, or a bilinear similarity $S_{\mat{M}}(\vct{x},\vct{x}') = \vct{x}\T\mat{M}\vct{x}'$ where $\mat{M}$ is often taken to be an arbitrary $d\times d$ matrix.
A comprehensive survey of existing approaches can be found in \citep{Bellet2013}. We focus below on the two topics most relevant to our work: (i) efficient algorithms for the high-dimensional setting, and (ii) the derivation of generalization guarantees for metric and similarity learning. 

\paragraph{Metric learning in high dimensions}

Both Mahalanobis distance metric learning and bilinear similarity learning require estimating $O(d^2)$ parameters, which is undesirable in the high-dimensional setting for the reasons mentioned earlier. In practice, it is thus customary to resort to dimensionality reduction (such as PCA, SVD or random projections) to preprocess the data when it has more than a few hundred dimensions \citep[see e.g.,][]{Davis2007,Weinberger2009,Guillaumin2009,Ying2012,Wang2012b,Lim2013,Qian2014,Liu2015b,Yao2018a}. Although this strategy can be justified formally in some cases \citep{Liu2015b,Qian2015a}, the projection may intertwine useful features and irrelevant/noisy ones and thus hurt the performance of the resulting similarity function. It also makes it hard to interpret and use for data exploration, preventing the discovery of knowledge that can be valuable to domain experts.

There have been very few satisfactory solutions to this essential limitation. The most drastic strategy is to learn a diagonal matrix $\mat{M}$  \citep{Schultz2003,Gao2014}, which is very restrictive as it amounts to a simple weighting of the features.
Instead, some approaches assume an explicit low-rank decomposition $\mat{M}=\mat{L}\T\mat{L}$ and learn $\mat{L}\in\mathbb{R}^{r\times d}$ in order to reduce the number of parameters \citep{Goldberger2004,Weinberger2009,Kedem2012}. This results in nonconvex formulations with many local optima \citep{Kulis2012}, and requires to tune $r$ carefully. Moreover, the training complexity still depends on $d$ and can thus remain quite large. Another direction is to learn $\mat{M}$ as a combination of rank-one matrices. In particular, \citet{Shi2014} generate a set of rank-one matrices from the training data and then learn a metric as a sparse combination. However, as the dimension increases, a larger dictionary is needed and can be expensive to generate. Some other work has studied sparse and/or low-rank regularization to reduce overfitting in high dimensions \citep{Rosales2006,Qi2009,Ying2009} but this does not in itself reduce the training complexity of the algorithm. \citet{Zhang2017a} proposed a stochastic gradient descent solver together with low-rank regularization in an attempt to keep the intermediate solutions low-rank. The complexity per iteration of their approach is linear in $d$ but cubic in the rank of the current solution, which quickly becomes intractable unless the regularization is very strong.

Finally, some greedy algorithms for metric learning have been proposed in the literature to guarantee a tighter bound on the rank of intermediate solutions. \citet{Atzmon2015} use a block coordinate descent algorithm to update the metric one feature at a time. \citet{Shen2012} selects rank-one updates in a boosting manner, while DML-eig \citep{Ying2012} and its extension DML-$\rho$ \citep{Cao2012} rely on a greedy Frank-Wolfe algorithm to optimize over the set of PSD matrices with unit trace. However, these greedy methods still suffer from a computational cost of $O(d^2)$ per iteration and are thus unsuitable for the high-dimensional setting we consider in this work. In contrast, we will propose an algorithm which is \emph{linear in the number of nonzero features} and can thus be efficiently applied to high-dimensional sparse data.

\paragraph{Generalization bounds for metric learning} The derivation of generalization guarantees for metric and similarity learning has been investigated in the supervised setting, where the metric or similarity is learned from a labeled dataset of $n$ points by (regularized) empirical risk minimization. For a given family of loss functions, the results generally bound the maximal deviation between the expected risk (where the expectation is taken over the unknown data distribution) and the empirical risk of the learned metric.\footnote{This is in contrast to a different line of work, inspired by the problem of ordinal embedding, which aims to learn a metric which correctly orders a \emph{fixed set of known points} \citep[see for instance][]{Jain2017a}} These bounds are generally of order $O(1/\sqrt{n})$. 

Several technical tools have been used to address the challenge of learning from dependent pairs/triplets, leading to different trade-offs in terms of tightness, generality, and dependence on the feature dimension $d$.
The results of \citet{Jin2009a} apply only under Frobenius norm regularization of $\mat{M}$ and have a $\sqrt{d}$ factor in the rate. Using an adaptation of algorithmic robustness, \citet{Bellet2015a} obtain bounds which hold also for sparsity-inducing regularizers but with a covering number term that can be exponential in the dimension. \citet{Bian2011a} rely on assumptions on the data distribution and do not show an explicit dependence on the dimension.
\citet{Cao2012a} derive bounds based on Rademacher complexity and maximal deviation results for $U$-statistics \citep{Clemencon2008a}. Depending on the regularization used, the dependence on the dimension $d$ ranges from logarithmic to linear.
\cite{Verma2015} show that the $\sqrt{d}$ factor of \citet{Jin2009a} is in fact unavoidable in the worst case without some form of regularization (or restriction of the hypothesis class). They derive bounds which do not depend on the dimension $d$ but on the Frobenius norm of the optimal parameter $\mat{M}$. Note however that their analysis assumes that the metrics are learned from a set of i.i.d. pairs or triplets, which is rarely seen in practice.

In all the above work, generalization in metric learning is studied independently of the algorithm used to solve the empirical risk minimization problem, and none of the bounds are adaptive to the actual sparsity of the solution. In contrast, we will show that one can use early stopping in our algorithm to control the complexity of the hypothesis class so as to make the bounds independent of the dimension $d$, effectively balancing the empirical (optimization) error and the generalization error.

\subsection{Frank-Wolfe Algorithms}
\label{sec:related_fw}

The Frank-Wolfe (FW) algorithm was originally introduced by \citet{Frank1956} and further generalized by \citet{Clarkson2010} and \citet{Jaggi2013}. FW aims at solving constrained optimization problems of the following general form:
\begin{equation}
\label{eq:general_pb}
\min_{\mat{M}\in\mathcal{D}} f(\mat{M}),
\end{equation}
where $f$ is a convex and continuously differentiable function, and the feasible domain $\mathcal{D}$ is a convex and compact subset of some Hilbert space equipped with inner product $\innerp{\cdot,\cdot}$.

\begin{algorithm}[t]
\caption{Standard Frank-Wolfe algorithm}
\label{alg:gen_fw}
\begin{algorithmic}[0]
\STATE \textbf{Input:} Initial point $\mat{M}^{(0)} \in \mathcal{D}$
\FOR{$k = 0,1,2,\dots$}
    \STATE $\mat{S}^{(k)} \leftarrow \argmin_{\mat{S}\in\mathcal{D}}\innerp{\mat{S},\nabla f(\mat{M}^{(k)})}$
    \STATE $\gamma^{(k)} \leftarrow \frac{2}{k+2}$ (or determined by line search)
    \STATE $\mat{M}^{(k+1)} \leftarrow (1-\gamma^{(k)}) \mat{M}^{(k)} + \gamma^{(k)} \mat{S}^{(k)}$
\ENDFOR
\end{algorithmic}
\end{algorithm}

Starting from a feasible initial point $\mat{M}^{(0)}\in\mathcal{D}$, the standard FW algorithm iterates over the following steps. First, it finds the feasible point $\mat{S}^{(k)}\in\mathcal{D}$ which minimizes the linearization of $f$ at the current point $\mat{M}^{(k)}$:
\begin{equation}
    \label{eq:sub}
    \mat{S}^{(k)} \in \argmin_{\mat{S}\in\mathcal{D}} ~ \innerp{\mat{S},\nabla f(\mat{M}^{(k)})}.
\end{equation}
The next iterate $\mat{M}^{(k+1)}$ is then constructed as a convex combination of $\mat{M}^{(k)}$ and $\mat{S}^{(k)}$, where the relative weight of each component is given by a step size $\gamma^{(k)}$. The step size can be decreasing with the iteration number $k$ or set by line search. The overall algorithm is summarized in Algorithm~\ref{alg:gen_fw}. FW is guaranteed to converge to an optimal solution of \eqref{eq:general_pb} at rate $O(1/k)$, see for instance \citep{Jaggi2013} for a generic and concise proof.

Unlike projected gradient, FW is a \emph{projection-free} algorithm: each iterate $\mat{M}^{(k)}$ is feasible by construction since it is a convex combination of elements of $\mathcal{D}$. Instead of computing projections onto the feasible domain $\mathcal{D}$, FW solves the linear optimization subproblem \eqref{eq:sub}. The linearity of the objective \eqref{eq:sub} implies that a solution $\mat{S}^{(k)}$ always lies at an extremal point of $\mathcal{D}$. This leads to the interpretation of FW as a greedy algorithm which adds an extremal point to the current solution at each iteration \citep{Clarkson2010}. In other words, $\mat{M}^{(k)}$ can be written as a sparse convex combination of extremal points:
\begin{equation}
\label{eq:gen_decompo}
\mat{M}^{(k)}=\sum_{\mat{S}^{(k)}\in\mathcal{S}^{(k)}}\alpha^{(k)}_{\mat{S}^{(k)}}\mat{S}^{(k)},\quad\quad \text{ where } \sum_{\mat{S}^{(k)}\in\mathcal{S}^{(k)}}\alpha^{(k)}_{\mat{S}^{(k)}} = 1 \text{ and } \alpha^{(k)}_{\mat{S}^{(k)}}\geq 0,
\end{equation}
where $\mathcal{S}^{(k)}$ denotes the set of ``active'' extremal points that have been added up to iteration $k$.
When the extremal points of $\mathcal{D}$ have specific structure (such as sparsity, or low-rankness), this structure can be leveraged to compute a solution of \eqref{eq:sub} much more efficiently than the projection operator, see \citet{Jaggi2011a,Jaggi2013} for compelling examples.

A drawback of the standard FW algorithm is that ``removing'' an extremal point $\mat{S}^{(k)}$ from the current iterate (or significantly reducing its weight $\alpha^{(k)}_{\mat{S}^{(k)}}$) can only be done indirectly by adding (increasing the weight of) other extremal points. The variant of FW with \emph{away steps} \citep{Guelat1986} addresses this issue by allowing the algorithm to choose between adding a new extremal point (forward step) or reducing the weight of an existing one (away step), as shown in Algorithm~\ref{alg:gen_away_fw}. This can lead to sparser solutions \citep{Guelat1986,Clarkson2010,Jaggi2011a} and faster convergence in some cases \citep{Guelat1986,Lacoste-Julien2015a}.

In the present work, we will introduce a FW algorithm with away steps to efficiently perform similarity learning for high-dimensional sparse data. One of our key ingredients will be the design of a feasible domain with appropriate sparsity structure.

\begin{algorithm}[t]
\caption{Frank-Wolfe algorithm with away steps}
\label{alg:gen_away_fw}
\begin{algorithmic}[0]
\STATE \textbf{Input:} Initial point $\mat{M}^{(0)} \in \mathcal{D}$
\FOR{$k = 0,1,2,\dots$}
    \STATE $\mat{S}^{(k)}_F \leftarrow \argmin_{\mat{S}\in\mathcal{D}}\innerp{\mat{S},\nabla f(\mat{M}^{(k)})}$, $\mat{D}_{F}^{(k)} = \mat{S}^{(k)}_F - \mat{M}^{(k)}$ \hfill{\it\footnotesize // forward direction}
    \STATE $\mat{S}^{(k)}_A \leftarrow \argmax_{\mat{S}\in\mathcal{S}^{(k)}}\innerp{\mat{S},\nabla f(\mat{M}^{(k)})}$, $\mat{D}_{A}^{(k)} = \mat{M}^{(k)} - \mat{S}^{(k)}_A$ \hfill{\it\footnotesize // away direction}
    \IF{$\innerp{\mat{D}_{F}^{(k)},\nabla f(\mat{M}^{(k)})} \leq \innerp{\mat{D}_{A}^{(k)},\nabla f(\mat{M}^{(k)})}$}
\STATE $\mat{D}^{(k)} \leftarrow \mat{D}_{F}^{(k)}$ and $\gamma_{\max} \leftarrow 1$\hfill{\it\footnotesize // choose forward step~~}
\ELSE
\STATE $\mat{D}^{(k)} \leftarrow \mat{D}_{A}^{(k)}$ and $\gamma_{\max} \leftarrow \alpha^{(k)}_{\mat{S}^{(k)}_A} / (1-\alpha^{(k)}_{\mat{S}^{(k)}_A})$\hfill{\it\footnotesize // choose away step~~}
\ENDIF
    \STATE $\gamma^{(k)} \leftarrow \frac{2}{k+2}$ (or determined by line search)
    \STATE $\mat{M}^{(k+1)} \leftarrow \mat{M}^{(k)} + \gamma^{(k)}\mat{D}^{(k)}$
\ENDFOR
\end{algorithmic}
\end{algorithm}


\section{Proposed Approach}
\label{sApproach}


This section introduces \hdsl (High-Dimensional Similarity Learning), the approach proposed in this paper. We first describe our problem formulation (Section~\ref{sec:form}), then derive and analyze an efficient FW algorithm to solve it in Section~\ref{sec:algo}.

\subsection{Problem Formulation}
\label{sec:form}

In this work, our goal is to learn a similarity function for high-dimensional sparse data. We assume the data points lie in some space $\mathcal{X}\subseteq\mathbb{R}^d$, where $d$ is large ($d >10^4$) but points are $s$-sparse on average ($s \ll d$). In other words, their number of nonzero entries is typically much smaller than $d$.
We focus on learning a similarity function $S_{\mat{M}}:\mathcal{X}\times\mathcal{X} \rightarrow \mathbb{R}$ of the form
$$S_{\mat{M}}(\vct{x},\vct{x}') = \vct{x}\T\mat{M}\vct{x}' = \innerp{\vct{x}\vct{x}'^\top,\mat{M}},$$
where $\mat{M}\in\mathbb{R}^{d\times d}$ and $\innerp{\cdot,\cdot}$ denotes the Frobenius inner product. Notice that for any $\mat{M}$, $S_{\mat{M}}$ can be computed in $O(s^2)$ time on average if data points are stored in a sparse format.

\paragraph{Feasible domain} We will derive an algorithm to learn a very sparse $\mat{M}$ with time and memory requirements that depend on $s$ but not on $d$. To this end, given a scale $\lambda>0$ which will play the role of a regularization parameter, we parameterize $\mat{M}$ as a convex combination of rank-one, 4-sparse $d\times d$ bases:
\begin{equation*}
\mat{M}\in\mathcal{D}_\lambda = \conv(\mathcal{B}_\lambda),\quad\quad\text{ with } \mathcal{B}_\lambda = \bigcup_{ij} \left\{\mat{P}_\lambda^{(ij)}, \mat{N}_\lambda^{(ij)}\right\},
\end{equation*}
where for any pair of features $i,j\in\{1,\dots,d\}$, $i\neq j$,
{\arraycolsep=1.4pt\def\arraystretch{0.5}
\begin{align*}
\mat{P}_\lambda^{(ij)} = \lambda(\boldsymbol{e}_i+\boldsymbol{e}_j)(\boldsymbol{e}_i+\boldsymbol{e}_j)^T &= \left(\begin{array}{ccccc} \cdot & \cdot & \cdot & \cdot & \cdot\\ \cdot & \lambda & \cdot & \lambda & \cdot\\\cdot & \cdot & \cdot & \cdot & \cdot\\\cdot & \lambda & \cdot & \lambda & \cdot\\ \cdot & \cdot & \cdot & \cdot & \cdot\end{array}\right),\\
\mat{N}_\lambda^{(ij)} = \lambda(\boldsymbol{e}_i-\boldsymbol{e}_j)(\boldsymbol{e}_i-\boldsymbol{e}_j)^T &= \left(\begin{array}{ccccc} \cdot & \cdot & \cdot & \cdot & \cdot\\ \cdot & \lambda & \cdot & -\lambda & \cdot\\\cdot & \cdot & \cdot & \cdot & \cdot\\\cdot & -\lambda & \cdot & \lambda & \cdot\\ \cdot & \cdot & \cdot & \cdot & \cdot\end{array}\right).
\end{align*}
}
The use of such sparse matrices was first suggested by \citet{Jaggi2011a}. Besides the fact that they are instrumental to the efficiency of our algorithm (see Section~\ref{sec:algo}), we give some additional motivation for their use in the context of similarity learning.

First, any $\mat{M}\in\mathcal{D}_\lambda$ is a convex combination of symmetric PSD matrices and is thus also symmetric PSD. Unlike many metric learning algorithms, we thus avoid the $O(d^3)$ cost of projecting onto the PSD cone. Constraining $\mat{M}$ to be symmetric PSD provides useful regularization to prevent overfitting \citep{Chechik2009} and ensures that $S_{\mat{M}}$ can be interpreted as a dot product after a linear transformation of the inputs:
$$S_{\mat{M}}(\vct{x},\vct{x}') = \vct{x}\T\mat{M}\vct{x}' = (\mat{L}\vct{x})\T(\mat{L}\vct{x}'),$$
where $\mat{M} = \mat{L}\mat{L}\T$ with $\mat{L}\in\mathbb{R}^{d\times k}$. Because the bases in $\mathcal{B}_\lambda$ are rank-one, the dimensionality $k$ of the transformed space is at most the number of bases composing $\mat{M}$.

Second, each basis operates on two features only. In particular, $S_{\mat{P}_\lambda^{(ij)}}(\vct{x},\vct{x}') = \lambda(x_ix_i' + x_jx_j' + x_ix_j' + x_jx_i')$ assigns a higher similarity score when feature $i$ appears jointly in $\vct{x}$ and $\vct{x}'$ (likewise for $j$), as well as when feature $i$ in $\vct{x}$ and feature $j$ in $\vct{y}$ co-occur (and vice versa). Conversely, $S_{\mat{N}_\lambda^{(ij)}}$ penalizes the cross-occurrences of features $i$ and $j$. In the context of text data represented as bags-of-words (or other count data), the semantic behind the bases in $\mathcal{B}_\lambda$ is quite natural: they can be intuitively thought of as encoding the fact that a term $i$ or $j$ present in both documents makes them more similar, and that two terms $i$ and $j$ are associated with the same/different class or topic.

Optimizing over the convex hull $\mathcal{D}_\lambda$ of $\mathcal{B}_\lambda$ will allow us to easily control the number of active features, thereby learning a very compact representation with efficient similarity computations.

\paragraph{Optimization problem} We now describe the optimization problem to learn the similarity parameters. Following previous work \citep[see for instance][]{Schultz2003,Weinberger2009,Chechik2009}, our training data consists of weak supervision in the form of triplet constraints:
$$\mathcal{T} = \left\{\vct{x}_t \text{ should be more similar to }\vct{y}_t \text{ than to } \vct{z}_t\right\}_{t=1}^T.$$
Such constraints can be built from a labeled training sample (see Section~\ref{sGen}), provided directly by domain experts or crowdsourcing campaign, or obtained through implicit feedback such as clicks on search engine results. For notational convenience, we denote $\mat{A}^t = \vct{x}_t(\vct{y}_t-\vct{z}_t)\T\in\mathbb{R}^{d\times d}$ for each constraint $t=1,\dots,T$ so that we can concisely write $S_{\mat{M}}(\vct{x}_t,\vct{y}_t) - S_{\mat{M}}(\vct{x}_t,\vct{z}_t) = \innerp{\mat{A}^t,\mat{M}}$.
We measure the degree of violation of each constraint $t$ with the smoothed hinge loss $\ell:\mathbb{R}\rightarrow\mathbb{R}^+$ defined as
$$\ell\left(\innerp{\mat{A}^t,\mat{M}}\right) = \left\{ \begin{array}{ll} 0 & \text{if } \innerp{\mat{A}^t,\mat{M}} \geq 1\\\frac{1}{2} - \innerp{\mat{A}^t,\mat{M}} & \text{if } \innerp{\mat{A}^t,\mat{M}} \leq 0\\\frac{1}{2} \left(1-\innerp{\mat{A}^t,\mat{M}}\right)^2& \text{otherwise}\end{array}\right..$$
This convex loss is a continuously differentiable version of the standard hinge loss which tries to enforce a margin constraint of the form $S_{\mat{M}}(\vct{x}_t,\vct{y}_t) \geq S_{\mat{M}}(\vct{x}_t,\vct{z}_t) +1$. When this constraint is satisfied, the value of the loss is zero. On the other hand, when the margin is negative, i.e. $S_{\mat{M}}(\vct{x}_t,\vct{y}_t) \leq S_{\mat{M}}(\vct{x}_t,\vct{z}_t)$, the penalty is linear in the margin violation. A quadratic interpolation is used to bridge between these two cases to ensure that the loss is differentiable everywhere.

\begin{rem}[Choice of loss]
One may use any other convex and continuously differentiable loss function in our framework, such as the squared hinge loss, the logistic loss or the exponential loss.
\end{rem}

Given $\lambda>0$, our similarity learning formulation aims at finding the matrix $\mat{M}\in\mathcal{D}_\lambda$ that minimizes the average margin penalty (as measured by $\ell$) over the triplet constraints in $\mathcal{T}$:

\begin{equation}
\label{eq:form}
\min_{\mat{M}\in\mathbb{R}^{d\times d}} \quad f(\mat{M}) = \frac{1}{T}\sum_{t=1}^T \ell\left(\innerp{\mat{A}^t,\mat{M}}\right)\quad\text{s.t.}\quad \mat{M}\in\mathcal{D}_\lambda.
\end{equation}

Due to the convexity of the smoothed hinge loss, \eqref{eq:form} involves minimizing a convex function over the convex domain $\mathcal{D}_\lambda$. Note that the gradient of the objective is given by
\begin{equation}
\begin{gathered}
\label{eq:gradient}
\nabla f(\mat{M}) = \frac{1}{T}\sum_{t=1}^T \mat{G}^t(\mat{M}),\\
\text{with }\mat{G}^t(\mat{M}) = \left\{ \begin{array}{ll} \mat{0} & \text{if } \innerp{\mat{A}^t,\mat{M}} \geq 1\\-\mat{A}^t&\text{if }\innerp{\mat{A}^t,\mat{M}} \leq 0\\\left(\innerp{\mat{A}^t,\mat{M}} - 1\right)\mat{A}^t& \text{otherwise}\end{array}\right..
\end{gathered}
\end{equation}

In the next section, we propose a greedy algorithm to efficiently find sparse approximate solutions to this problem.

\subsection{Algorithm}
\label{sec:algo}

\subsubsection{Exact Frank-Wolfe Algorithm}

\begin{algorithm}[t]
\caption{Frank Wolfe algorithm for problem \eqref{eq:form}}
\label{alg:fw}
\begin{algorithmic}[1]
\STATE initialize $\mat{M}^{(0)}$ to an arbitrary $\mat{B}\in\mathcal{B}_\lambda$
\FOR{$k = 0,1,2,\dots$}
\STATE $\mat{B}_{F}^{(k)} \leftarrow \argmin_{\mat{B}\in\mathcal{B}_\lambda} \innerp{\mat{B},\nabla f(\mat{M}^{(k)})}$, $\mat{D}_{F}^{(k)} \leftarrow \mat{B}_{F}^{(k)} - \mat{M}^{(k)}$ \hfill{\it\footnotesize // forward dir.}
\STATE $\mat{B}_{A}^{(k)} \leftarrow \argmax_{\mat{B}\in\mathcal{S}^{(k)}} \innerp{\mat{B},\nabla f(\mat{M}^{(k)})}$, $\mat{D}_{A}^{(k)} \leftarrow  \mat{M}^{(k)} - \mat{B}_{A}^{(k)}$ \hfill{\it\footnotesize // away dir.}
\IF{$\innerp{\mat{D}_{F}^{(k)},\nabla f(\mat{M}^{(k)})} \leq \innerp{\mat{D}_{A}^{(k)},\nabla f(\mat{M}^{(k)})}$}
\STATE $\mat{D}^{(k)} \leftarrow \mat{D}_{F}^{(k)}$ and $\gamma_{\max} \leftarrow 1$\hfill{\it\footnotesize // choose forward step~~}
\ELSE
\STATE $\mat{D}^{(k)} \leftarrow \mat{D}_{A}^{(k)}$ and $\gamma_{\max} \leftarrow \alpha^{(k)}_{\mat{B}_{A}^{(k)}} / (1-\alpha^{(k)}_{\mat{B}_{A}^{(k)}})$\hfill{\it\footnotesize // choose away step~~}
\ENDIF
\STATE $\gamma^{(k)} \leftarrow \argmin_{\gamma\in[0,\gamma_{\max}]} f(\mat{M}^{(k)}+\gamma \vct{D}^{(k)})$\hfill{\it\footnotesize // perform line search}
\STATE $\mat{M}^{(k+1)} \leftarrow \mat{M}^{(k)} + \gamma^{(k)} \mat{D}^{(k)}$ \hfill{\it\footnotesize // update iterate towards direction~~}
\ENDFOR
\end{algorithmic}
\end{algorithm}

We propose to use a Frank-Wolfe algorithm with away steps (see Section~\ref{sec:related_fw}) to learn the similarity. We will exploit the fact that in our formulation \eqref{eq:form}, the extremal points (vertices) of the feasible domain $\mathcal{D}_\lambda$ are the elements of $\mathcal{B}_\lambda$ and have special structure.
Our algorithm is shown in Algorithm~\ref{alg:fw}. 
During the course of the algorithm, we explicitly maintain a representation of each iterate $\mat{M}^{(k)}$ as a convex combination of basis elements as previously discussed in Section~\ref{sec:related_fw}:
$$\mat{M}^{(k)} = \sum_{\mat{B}\in\mathcal{B}_\lambda}\alpha^{(k)}_{\mat{B}}\mat{B},\quad\quad \text{ where } \sum_{\mat{B}\in\mathcal{B}_\lambda}\alpha^{(k)}_{\mat{B}} = 1 \text{ and } \alpha^{(k)}_{\mat{B}}\geq 0.$$
We denote the set of active basis elements in $\mat{M}^{(k)}$ as $\mathcal{S}^{(k)} = \{\mat{B}\in\mathcal{B}_\lambda : \alpha^{(k)}_{\mat{B}} > 0\}$.
The algorithm goes as follows. We initialize $\mat{M}^{(0)}$ to a random basis element. Then, at each iteration, we greedily choose between moving towards a (possibly) new basis (forward step) or reducing the weight of an active one (away step). The extent of the step is determined by line search. As a result, Algorithm~\ref{alg:fw} adds only one basis (at most 2 new features) at each iteration, which provides a convenient way to control the number of active features and maintains a compact representation of $\mat{M}^{(k)}$ for a memory cost of $O(k)$. Furthermore, away steps provide a way to reduce the importance of a potentially ``bad'' basis element added at an earlier iteration (or even remove it completely when $\gamma^{(k)} = \gamma_{\max}$). Recall that throughout the execution of the FW algorithm, all iterates $\mat{M}^{(k)}$ remain convex combinations of basis elements and are thus feasible. The following proposition shows that the iterates of Algorithm~\ref{alg:fw} converge to an optimal solution of \eqref{eq:form} with a rate of $O(1/k)$.

%

\begin{prop}
\label{prop:converge}
Let $\lambda>0$, $\mat{M}^*$ be an optimal solution to \eqref{eq:form} and $L = \frac{1}{T}\sum_{t=1}^T\|\mat{A}^t\|_F^2$. At any iteration $k\geq 1$ of Algorithm~\ref{alg:fw}, the iterate $\mat{M}^{(k)}\in\mathcal{D}_\lambda$ satisfies $f(\mat{M}^{(k)})-f(\mat{M}^*) \leq 16L\lambda^2/(k+2)$. Furthermore, it has at most rank $k+1$ with $4(k+1)$ nonzero entries, and uses at most $2(k+1)$ distinct features.
\end{prop}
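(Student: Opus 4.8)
The statement has two independent halves: the $O(1/k)$ optimization guarantee, which is an instance of the standard Frank--Wolfe analysis, and the combinatorial properties of the iterates, which follow by a short induction on $k$. The plan is to (1) establish a Lipschitz constant for $\nabla f$, (2) bound the geometry of $\mathcal{D}_\lambda$, (3) combine these into a bound on the curvature constant $C_f$ of $f$ over $\mathcal{D}_\lambda$ and invoke the FW rate $f(\mat{M}^{(k)})-f(\mat{M}^*)\le 2C_f/(k+2)$, taking care of the away steps, and (4) prove $|\mathcal{S}^{(k)}|\le k+1$ and read off the rank, sparsity and feature-count bounds.

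\textbf{Convergence rate.} For step (1), note that the smoothed hinge loss $\ell$ has a $1$-Lipschitz derivative $\ell'$, since $\ell'$ is piecewise linear with slopes in $\{0,1\}$. Writing $\nabla f(\mat{M})=\frac1T\sum_{t=1}^T \ell'(\innerp{\mat{A}^t,\mat{M}})\,\mat{A}^t$ and using $|\ell'(a)-\ell'(a')|\le|a-a'|$ together with Cauchy--Schwarz gives $\|\nabla f(\mat{M})-\nabla f(\mat{M}')\|_F\le\frac1T\sum_t\|\mat{A}^t\|_F^2\,\|\mat{M}-\mat{M}'\|_F=L\,\|\mat{M}-\mat{M}'\|_F$, hence the descent inequality $f(\mat{Y})\le f(\mat{M})+\innerp{\mat{Y}-\mat{M},\nabla f(\mat{M})}+\tfrac{L}{2}\|\mat{Y}-\mat{M}\|_F^2$ on $\mathcal{D}_\lambda$. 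For step (2), each basis element satisfies $\|\mat{P}_\lambda^{(ij)}\|_F=\|\mat{N}_\lambda^{(ij)}\|_F=2\lambda$, so every point of $\mathcal{D}_\lambda$ has Frobenius norm at most $2\lambda$ by convexity and $\diam(\mathcal{D}_\lambda)\le 4\lambda$. For step (3), the descent inequality yields $C_f\le L\,\diam(\mathcal{D}_\lambda)^2\le 16L\lambda^2$, and the standard FW convergence bound then gives the claimed $f(\mat{M}^{(k)})-f(\mat{M}^*)\le 16L\lambda^2/(k+2)$.

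\textbf{Away steps (the delicate point).} Since Algorithm~\ref{alg:fw} uses away steps and line search, the textbook recursion for standard FW does not apply verbatim, and this is where I expect the main care is needed. The plan is the usual one: at every iteration the selected direction satisfies $\innerp{\mat{D}^{(k)},\nabla f(\mat{M}^{(k)})}\le\innerp{\mat{D}_F^{(k)},\nabla f(\mat{M}^{(k)})}=\innerp{\mat{B}_F^{(k)}-\mat{M}^{(k)},\nabla f(\mat{M}^{(k)})}\le\innerp{\mat{M}^*-\mat{M}^{(k)},\nabla f(\mat{M}^{(k)})}\le f(\mat{M}^*)-f(\mat{M}^{(k)})$ (selection rule; optimality of $\mat{B}_F^{(k)}$ for the linearized subproblem, using that the vertices of $\mathcal{D}_\lambda$ lie in $\mathcal{B}_\lambda$; convexity of $f$). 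Because the line search does at least as well as any admissible fixed step size and $\|\mat{D}^{(k)}\|_F\le\diam(\mathcal{D}_\lambda)$ for both forward and away directions, one recovers the same per-step decrease as standard FW, except at ``drop steps'' (away steps attaining $\gamma_{\max}$), which merely cannot increase $f$. Bounding the number of drop steps by the number of forward steps (each forward step adds at most one active atom, each drop step removes exactly one, starting from a single atom) shows a constant fraction of the iterations are productive, so the $O(1/k)$ rate is preserved.

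\textbf{Combinatorial properties.} These follow by induction on $k$ from $|\mathcal{S}^{(k)}|\le k+1$. Base case: $\mat{M}^{(0)}\in\mathcal{B}_\lambda$ is a single rank-one basis supported on the $2\times2$ block indexed by two features, i.e.\ $4$ nonzeros, matching $k+1=1$. Inductive step: a forward update expresses $\mat{M}^{(k+1)}$ as a convex combination of $\mat{M}^{(k)}$ and one element of $\mathcal{B}_\lambda$, so $|\mathcal{S}^{(k+1)}|\le|\mathcal{S}^{(k)}|+1$; an away update has $\mathcal{S}^{(k+1)}\subseteq\mathcal{S}^{(k)}$. Hence $|\mathcal{S}^{(k)}|\le k+1$. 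Since $\mat{M}^{(k)}$ is a convex combination of at most $k+1$ rank-one matrices, $\rank(\mat{M}^{(k)})\le k+1$; and since each active $\mat{P}_\lambda^{(ij)}$ or $\mat{N}_\lambda^{(ij)}$ is supported on the four entries $\{i,j\}\times\{i,j\}$ and involves features $i,j$, the support of $\mat{M}^{(k)}$ lies in a union of at most $k+1$ such blocks, giving at most $4(k+1)$ nonzero entries and at most $2(k+1)$ distinct features.
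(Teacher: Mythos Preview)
Your approach matches the paper's: verify that $\nabla f$ is $L$-Lipschitz (your argument via the $1$-Lipschitzness of $\ell'$ is in fact cleaner than the paper's case-by-case check of the same bound), bound the Frobenius diameter of $\mathcal{D}_\lambda$, and invoke the generic Frank--Wolfe rate $2C_f/(k+2)$; the structural claims follow by the same induction on $|\mathcal{S}^{(k)}|$. The only slip is in the diameter: the paper computes the sharp value $\diam_{\|\cdot\|_F}(\mathcal{D}_\lambda)=\sqrt{8}\,\lambda$ (any two bases differ either in eight entries by $\lambda$ or in two entries by $2\lambda$), whereas your triangle-inequality bound of $4\lambda$ yields $C_f\le 16L\lambda^2$ and hence $2C_f/(k+2)=32L\lambda^2/(k+2)$, a factor of two off from the stated constant---you need the sharp diameter to recover $16L\lambda^2/(k+2)$. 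Your handling of away steps is actually more careful than the paper's, which simply appeals to the general analysis of Jaggi (2013) without further comment.
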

\begin{proof}
We first show that $\nabla f$ is $L$-Lipschitz continuous on $\mathcal{D}_\lambda$ with respect to the Frobenius norm, i.e. for any $\mat{M}_1,\mat{M}_2\in\mathcal{D}_\lambda$,
\begin{equation}
\label{eq:lip}
\|\nabla f(\mat{M}_1)-\nabla f(\mat{M}_2)\|_F \leq L\|\mat{M}_1-\mat{M}_2\|_F
\end{equation}
for some $L\geq 0$. Note that
\begin{align*}
\|\nabla f(\mat{M}_1)-\nabla f(\mat{M}_2)\|_F &= \left\|\frac{1}{T}\sum_{t=1}^T\mat{G}^t(\mat{M}_1) - \frac{1}{T}\sum_{t=1}^T\mat{G}^t(\mat{M}_2)\right\|_F\\
&\leq\frac{1}{T}\sum_{t=1}^T\left\|\mat{G}^t(\mat{M}_1) - \mat{G}^t(\mat{M}_2)\right\|_F.
\end{align*}
Let $t\in\{1,\dots,T\}$. We will now bound $\Delta_t=\left\|\mat{G}^t(\mat{M}_1) - \mat{G}^t(\mat{M}_2)\right\|_F$ for any $\mat{M}_1,\mat{M}_2\in\mathcal{D}_\lambda$. The form of the gradient \eqref{eq:gradient} requires to consider several cases:
\begin{itemize}
\item[(i)] If $\innerp{\mat{A}^t,\mat{M}_1} \geq 1$ and $\innerp{\mat{A}^t,\mat{M}_2} \geq 1$, we have $\Delta_t=0$.
\item[(ii)] If $\innerp{\mat{A}^t,\mat{M}_1} \leq 0$ and $\innerp{\mat{A}^t,\mat{M}_2} \leq 0$, we have $\Delta_t=0$.
\item[(iii)] If $0 < \innerp{\mat{A}^t,\mat{M}_1} < 1$ and $0<\innerp{\mat{A}^t,\mat{M}_2}<1$, we have:
\begin{align*}
\Delta_t &= \| \innerp{\mat{A}^t,\mat{M}_1-\mat{M}_2}\mat{A}^t \|_F = \| \mat{A}^t \|_F|\innerp{\mat{A}^t,\mat{M}_1-\mat{M}_2}|\\
&\leq \|\mat{A}^t\|_F^2 \|\mat{M}_1-\mat{M}_2\|_F.
\end{align*}
\item[(iv)] If $\innerp{\mat{A}^t,\mat{M}_1} \geq 1$ and $\innerp{\mat{A}^t,\mat{M}_2} \leq 0$, we have
\begin{align*}
\Delta_t &= \| \mat{A}^t \|_F \leq \| \mat{A}^t \|_F | \innerp{\mat{A}^t,\mat{M}_1-\mat{M}_2} | \leq \|\mat{A}^t\|_F^2 \|\mat{M}_1-\mat{M}_2\|_F.
\end{align*}
\item[(v)] If $\innerp{\mat{A}^t,\mat{M}_1} \geq 1$ and $0<\innerp{\mat{A}^t,\mat{M}_2}<1$, we have:
\begin{align*}
\Delta_t &= \| (\innerp{\mat{A}^t,\mat{M}_2}-1)\mat{A}^t \|_F = \|\mat{A}^t\|_F(1-\innerp{\mat{A}^t,\mat{M}_2})\\
&\leq \|\mat{A}^t\|_F(1-\innerp{\mat{A}^t,\mat{M}_2}) + \|\mat{A}^t\|_F(\innerp{\mat{A}^t,\mat{M}_1}-1)\\
&= \|\mat{A}^t\|_F\innerp{\mat{A}^t,\mat{M}_1-\mat{M}_2} \leq \|\mat{A}^t\|_F^2 \|\mat{M}_1-\mat{M}_2\|_F.
\end{align*}
\item[(vi)] If $\innerp{\mat{A}^t,\mat{M}_1} \leq 0$ and $0<\innerp{\mat{A}^t,\mat{M}_2}<1$, we have:
\begin{align*}
\Delta_t &= \| -\mat{A}^t - (\innerp{\mat{A}^t,\mat{M}_2}-1)\mat{A}^t \|_F = \| \mat{A}^t \innerp{\mat{A}^t,\mat{M}_2}\|_F = \| \mat{A}^t\|_F \innerp{\mat{A}^t,\mat{M}_2}\\
&\leq \| \mat{A}^t\|_F \innerp{\mat{A}^t,\mat{M}_2} - \| \mat{A}^t\|_F \innerp{\mat{A}^t,\mat{M}_1}\\
&= \|\mat{A}^t\|_F\innerp{\mat{A}^t,\mat{M}_2-\mat{M}_1} \leq \|\mat{A}^t\|_F^2 \|\mat{M}_1-\mat{M}_2\|_F.
\end{align*}
\end{itemize}
The remaining cases are also bounded by $\|\mat{A}^t\|_F^2 \|\mat{M}_1-\mat{M}_2\|_F$ by symmetry to cases (iv)-(v)-(vi). Hence $\nabla f$ is $L$-Lipschitz continuous with $L=\|\mat{A}^t\|_F^2$.

It is easy to see that $\diam_{\|\cdot\|_F}(\mathcal{D}_\lambda)=\sqrt{8}\lambda$. The convergence rate then follows from the general analysis of the FW algorithm \citep{Jaggi2013}.

The second part of the proposition follows directly from the structure of the bases and the greedy nature of the algorithm.
\end{proof}

Note that the optimality gap in Proposition~\ref{prop:converge} is independent of $d$. Indeed, $\mat{A}^t$ has $O(s^2)$ nonzero entries on average, hence the term $\|\mat{A}^t\|_F^2$ in the Lipschitz constant $L$ can be bounded by $s^2 \|\mat{A}^t\|_\infty$, where $\|\mat{A}\|_\infty = \max_{i,j=1}^d|A_{i,j}|$. This means that Algorithm~\ref{alg:fw} is able to find a good approximate solution based on a small number of features in only a few iterations, which is very appealing in the high-dimensional setting we consider.

\subsubsection{Complexity Analysis}

We now analyze the time and memory complexity of Algorithm~\ref{alg:fw}.
The form of the gradient \eqref{eq:gradient} along with the structure of the algorithm's updates are crucial to its efficiency. Since $\mat{M}^{(k+1)}$ is a convex combination of $\mat{M}^{(k)}$ and a 4-sparse matrix $\mat{B}^{(k)}$, we can efficiently compute most of the quantities of interest through careful book-keeping.

In particular, storing $\mat{M}^{(k)}$ at iteration $k$ requires $O(k)$ memory. We can also recursively compute $\innerp{\mat{A}^t,\mat{M}^{(k+1)}}$ for all constraints in only $O(T)$ time and $O(T)$ memory based on $\innerp{\mat{A}^t,\mat{M}^{(k)}}$ and $\innerp{\mat{A}^t,\mat{B}^{(k)}}$. This allows us, for instance, to efficiently compute the objective value as well as to identify the set of satisfied constraints (those with $\innerp{\mat{A}^t,\mat{M}^{(k)}} \geq 1$) which are ignored in the computation of the gradient.
Finding the away direction at iteration $k$ can be done in $O(Tk)$ time.
For the line search, we use a bisection algorithm to find a root of the gradient of the 1-dimensional function of $\gamma$, which only depends on $\innerp{\mat{A}^t,\mat{M}^{(k)}}$ and $\innerp{\mat{A}^t,\mat{B}^{(k)}}$, both of which are readily available. Its time complexity is $O(T\log \frac{1}{\epsilon})$ where $\epsilon$ is the precision of the line-search, with a memory cost of $O(1)$.

The bottleneck is to find the forward direction. 
Indeed, sequentially considering each basis element is intractable as it takes $O(Td^2)$ time. A more efficient strategy is to sequentially consider each constraint, which requires $O(Ts^2)$ time and $O(Ts^2)$ memory. The overall iteration complexity of Algorithm~\ref{alg:fw} is given in Table~\ref{tab:complexity}.

\subsubsection{Approximate Forward Step}
\label{sec:approx}

\begin{table}[t]
\centering
\def\arraystretch{1.3}
\begin{tabular}{|c||c|c|}
\hline Variant & Time complexity & Memory complexity \\
\hline\hline Exact (Algorithm~\ref{alg:fw}) & $\tilde{O}(Ts^2+Tk)$ & $\tilde{O}(Ts^2+k)$\\
\hline Mini-batch & $\tilde{O}(Ms^2+Tk)$ & $\tilde{O}(T+Ms^2 + k)$\\
\hline Mini-batch + heuristic & $\tilde{O}(Ms+Tk)$ & $\tilde{O}(T+Ms + k)$\\
\hline
\end{tabular}
\caption{Complexity of iteration $k$ (ignoring logarithmic factors) for different variants of the algorithm.}
\label{tab:complexity}
\end{table}

Finding the forward direction can be expensive when $T$ and $s$ are both large. We propose two strategies to alleviate this cost by finding an approximately optimal basis (see Table~\ref{tab:complexity} for iteration complexity).

\paragraph{Mini-batch approximation} Instead of finding the forward and away directions based on the full gradient at each iteration, we can estimate it on a mini-batch of $M\ll T$ constraints drawn uniformly at random (without replacement). The complexity of finding the forward direction is thus reduced to $O(Ms^2)$ time and $O(Ms^2)$ memory. Consider the deviation between the ``value'' of any basis element $\mat{B}\in\mathcal{B}_\lambda$ on the full set of constraints and its estimation on the mini-batch, namely
\begin{equation}
\label{eq:batch}
\Big|\frac{1}{M}\sum_{t\in\mathcal{M}}\innerp{\mat{B},\mat{G}_t}-\frac{1}{T}\sum_{t=1}^T\innerp{\mat{B},\mat{G}_t}\Big|,
\end{equation}
where $\mathcal{M}$ is the set of $M$ constraint indices drawn uniformly and without replacement from the set $\{1,\dots,T\}$.
Under mild assumptions, concentration bounds such as Hoeffding's inequality for sampling without replacement \citep{Serfling1974,Bardenet2015a} can be used to show that the probability of \eqref{eq:batch} being larger than some constant decreases exponentially fast with $M$. The FW algorithm is known to be robust to inexact gradients, and convergence guarantees similar to Proposition~\ref{prop:converge} can be obtained directly from \citep{Jaggi2013,Freund2013}.

\paragraph{Fast heuristic} To avoid the quadratic dependence on $s$, we propose to use the following heuristic to find a good forward basis. We first pick a feature $i\in[d]$ uniformly at random, and solve the linear problem over the restricted set $\bigcup_{j} \{\mat{P}_\lambda^{(ij)}, \mat{N}_\lambda^{(ij)}\}$. We then solve it again over the set $\bigcup_{k} \{\mat{P}_\lambda^{(kj)}, \mat{N}_\lambda^{(kj)}\}$ and use the resulting basis for the forward direction. This can be done in only $O(Ms)$ time and $O(Ms)$ memory and gives good performance in practice, as we shall see in Section~\ref{sExp}.


%
%
%


\section{Generalization Analysis}
\label{sGen}

In this section, we derive generalization bounds for the proposed method. Our main goal is to give a theoretical justification of our approach, in particular by (i) showing that our choice of feasible domain $\mathcal{D}_\lambda$ helps to reduce overfitting in high dimensions, and (ii) showing that the proposed greedy Frank-Wolfe algorithm provides a simple way to balance between optimization and generalization errors through early stopping.

\subsection{Setup and Notations}

As in previous work on generalization bounds for metric learning, we consider the supervised learning setting where the training sample is a set of labeled points $S = \{\vct{z}_i=(\vct{x}_i,y_i)\}_{i=1}^n$ drawn i.i.d. from a probability distribution $\mu$ over the space $\mathcal{Z}=\mathcal{X}\times\mathcal{Y}$, where $\mathcal{X}\subseteq\mathbb{R}^d$ and $\mathcal{Y}=\{1,\dots,C\}$ is the label set.
We assume that $B_{\mathcal{X}}= \sup_{\vct{x},\vct{x}',\vct{x}''\in\mathcal{X}}\|\vct{x}(\vct{x}'-\vct{x}'')^T\|$ is bounded for some convenient matrix norm $\|\cdot\|$.

For simplicity, we assume that the univariate loss function $\ell : \mathbb{R} \rightarrow \mathbb{R}^+ $ is 1-Lipschitz, which is the case for the smoothed hinge loss used in our algorithm. Given a triplet $(\vct{z},\vct{z}',\vct{z}'')\in\mathcal{Z}^3$, we say that it is \emph{admissible} if $y=y'\neq y''$. Since we only want to consider admissible triplets, we will use the triplet-wise loss function $L_{\mat{M}}(\vct{z},\vct{z}',\vct{z}'') = \mathbbm{I}[y= y'\neq y'']\cdot\ell(\innerp{\vct{x}(\vct{x}'-\vct{x}'')\T,\mat{M}})$ indexed by $\mat{M}\in\mathcal{D}_\lambda$, which is equal to zero for non-admissible triplets.

Given a matrix $\mat{M}\in\mathcal{D}_\lambda$, we define its empirical risk associated on the training set $S$ as follows:
\begin{equation}
\label{eq:emprisk}
\mathcal{L}_S(\mat{M}) = \frac{1}{n(n-1)(n-2)}\sum_{i\neq j\neq k} L_{\mat{M}}(\vct{z}_i,\vct{z}_j,\vct{z}_k).
\end{equation}
Similarly, its expected risk is defined as
\begin{equation}
\label{eq:exprisk}
\mathcal{L}(\mat{M}) = \expec_{\vct{z},\vct{z}',\vct{z}''\sim\mu} \left[L_{\mat{M}}(\vct{z},\vct{z}',\vct{z}'')\right].
\end{equation}
In contrast to the standard supervised classification setting, note that the empirical risk \eqref{eq:emprisk} takes the form of an \emph{average of dependent terms} known as a $U$-statistic \citep{Lee1990a}.

From our feasible domain $\mathcal{D}_\lambda = \conv(\mathcal{B}_\lambda)$, we can define a sequence of nested sets as follows:
\begin{equation}
\mathcal{D}_\lambda^{(k)}=\left\{\sum_{i=1}^k\alpha_i\mat{B}_i: \mat{B}_i \in \mathcal{B}_\lambda, \alpha_i\geq 0, \sum_{i=1}^k\alpha_i = 1\right\},\quad k=1,\dots,2d(d-1).
\end{equation}

In other words, $\mathcal{D}_\lambda^{(k)}$ consists of all $d\times d$ matrices which can be decomposed as a convex combination of at most $k$ elements of the basis set $\mathcal{B}_\lambda$. Clearly, we have $\mathcal{D}_\lambda^{(1)}\subset\mathcal{D}_\lambda^{(2)}\subset\dots\subset\mathcal{D}_\lambda^{(2d(d-1))} = \mathcal{D}_\lambda$. Note also that since $\ell$ is 1-Lipschitz, by Holder's inequality we have $\forall k$:
\begin{align} 
\sup_{\vct{z},\vct{z}',\vct{z}''\in\mathcal{Z},\mat{M}\in\mathcal{D}_\lambda^{(k)}} |L_{\mat{M}}(\vct{z},\vct{z}',\vct{z}'')| &\leq \sup_{\vct{x},\vct{x}',\vct{x}''\in\mathcal{X},\mat{M}\in\mathcal{D}_\lambda^{(k)}} |\ell ( \innerp{\vct{x} (\vct{x}' -\vct{x}'')^T, \mat{M}})|\nonumber\\
&\leq B_{\mathcal{X}}\sup_{\mat{M}\in\mathcal{D}_\lambda^{(k)}}\|\mat{M}\|_*,\label{eq:holder}
\end{align}
where $\|\cdot\|_*$ is the dual norm of $\|\cdot\|$.

In the following, we derive theoretical results that take advantage of the structural properties of our algorithm, namely that the matrix $\mat{M}^{(k)}$ returned after $k\geq 1$ iterations of Algorithm~\ref{alg:fw} belongs to $\mathcal{D}_\lambda^{(k)}$. We first bound the Rademacher complexity of $\mathcal{D}_\lambda^{(k)}$ and derive bounds on the maximal deviation between $\mathcal{L}(\mat{M})$ and $\mathcal{L}_S(\mat{M})$ for any $\mat{M}\in\mathcal{D}_\lambda^{(k)}$. We then use these results to derive bounds on the excess risk $\mathcal{L}(\mat{M}^{(k)}) - \mathcal{L}(\mat{M}^*)$, where $\mat{M}^*\in \argmin_{\mat{M}\in\mathcal{D}_\lambda} \mathcal{L}(\mat{M})$ is the expected risk minimizer. All proofs can be found in the appendix.

\subsection{Main Results}

We first characterize the Rademacher complexity of the loss functions indexed by elements of $\mathcal{D}_\lambda^{(k)}$. Given $k\in\{1,\dots,2d(d-1)\}$, consider the family $\mathcal{F}^{(k)} = \{L_{\mat{M}} : \mat{M}\in\mathcal{D}_\lambda^{(k)}\}$ of functions  mapping from $\mathcal{Z}^3$ to $\mathbb{R}^+$. We will consider the following definition of the Rademacher complexity of $\mathcal{F}^{(k)}$ with respect to distribution $\mu$ and sample size $n\geq 3$, adapted from \citep{Clemencon2008a,Cao2012a}:
\begin{equation}
\label{eq:rademacher}
R_n\left(\mathcal{F}^{(k)}\right) = \mathbb{E}_{\vct{\sigma}, S\sim\mu^n}\bigg[\sup_{\mat{M}\in\mathcal{D}_\lambda^{(k)}}\frac{1}{\lfloor n/3 \rfloor}\sum_{i=1}^{\lfloor n/3 \rfloor} \sigma_i  L_{\mat{M}}(\vct{z}_{i},\vct{z}_{i+{\lfloor n/3 \rfloor}},\vct{z}_{i+2\times {\lfloor n/3 \rfloor}})\bigg],
\end{equation}
where $\vct{\sigma}=(\sigma_1,\dots,\sigma_{\lfloor n/3 \rfloor})$ are independent uniform random variables taking values in $\{-1, 1\}$.
The following lemma gives a bound on the above Rademacher complexity.

\begin{lemma}[Bounded Rademacher complexity]
\label{lem:rademacher}
Let $n\geq 3$, $\lambda>0$ and $1\leq k\leq 2d(d-1)$. We have
$$R_n(\mathcal{F}^{(k)}) \leq 8\lambda B_{\mathcal{X}}\sqrt{\frac{2\log k}{\lfloor n/3 \rfloor}}.$$
\end{lemma}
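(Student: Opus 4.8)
The plan is to bound $R_n(\mathcal{F}^{(k)})$ by combining the standard Frank–Wolfe observation that elements of $\mathcal{D}_\lambda^{(k)}$ are convex combinations of at most $k$ basis matrices with a maximal-inequality argument over the finite basis set. Concretely, write $\mat{M} = \sum_{\mat{B}\in\mathcal{B}_\lambda}\alpha_{\mat{B}}\mat{B}$ with at most $k$ nonzero weights. Since for each fixed realization of $\vct{\sigma}$ and $S$ the quantity $\frac{1}{\lfloor n/3\rfloor}\sum_i \sigma_i L_{\mat{M}}(\cdot)$ is \emph{not} linear in $\mat{M}$ (the loss $\ell$ sits in between), I cannot directly push the supremum onto the vertices. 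Instead the first step is to peel off $\ell$: because $\ell$ is $1$-Lipschitz and $L_{\mat{M}}(\vct{z},\vct{z}',\vct{z}'') = \mathbbm{I}[y=y'\neq y'']\,\ell(\innerp{\vct{x}(\vct{x}'-\vct{x}'')^T,\mat{M}})$, a contraction (Ledoux–Talagrand / Rademacher comparison) lemma gives
$$R_n(\mathcal{F}^{(k)}) \leq 2\,\mathbb{E}_{\vct{\sigma},S}\Big[\sup_{\mat{M}\in\mathcal{D}_\lambda^{(k)}}\frac{1}{\lfloor n/3\rfloor}\sum_{i=1}^{\lfloor n/3\rfloor}\sigma_i \innerp{\mat{A}_i,\mat{M}}\Big],$$
where $\mat{A}_i = \vct{x}_i(\vct{x}_{i+\lfloor n/3\rfloor}-\vct{x}_{i+2\lfloor n/3\rfloor})^T$ (the indicator is absorbed into $\sigma_i$ or handled by noting it only shrinks the sup). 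One has to be a little careful since the triplets overlap in a block pattern, but the contraction applies termwise and the factor $2$ is harmless.

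Now the inner expression \emph{is} linear in $\mat{M}$, so the supremum over $\mathcal{D}_\lambda^{(k)}$ is attained at a vertex-combination and in fact
$$\sup_{\mat{M}\in\mathcal{D}_\lambda^{(k)}}\Big\langle \tfrac{1}{\lfloor n/3\rfloor}\textstyle\sum_i\sigma_i\mat{A}_i,\,\mat{M}\Big\rangle = \max_{\mat{B}\in\mathcal{B}_\lambda}\Big\langle \tfrac{1}{\lfloor n/3\rfloor}\textstyle\sum_i\sigma_i\mat{A}_i,\,\mat{B}\Big\rangle,$$
since the maximum of a linear functional over a convex hull is achieved at an extreme point — and this is where the $k$ must re-enter: the honest statement is that over $\mathcal{D}_\lambda^{(k)}$ the sup equals the max over the \emph{at most $k$ active} bases, which is itself $\le$ the max over all of $\mathcal{B}_\lambda$; but to get $\sqrt{\log k}$ rather than $\sqrt{\log|\mathcal{B}_\lambda|}=\sqrt{\log(2d(d-1))}$, one instead argues that any $\mat{M}\in\mathcal{D}_\lambda^{(k)}$ is supported on $\le k$ bases and bounds the sup by the expected max of a Rademacher process over a $k$-element subset, uniformly. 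The clean way: $\sup_{\mat{M}\in\mathcal{D}_\lambda^{(k)}}\langle \mat{V},\mat{M}\rangle \le \max_{\mathcal{S}\subseteq\mathcal{B}_\lambda,|\mathcal{S}|\le k}\max_{\mat{B}\in\mathcal{S}}\langle\mat{V},\mat{B}\rangle = \max_{\mat{B}\in\mathcal{B}_\lambda}\langle\mat{V},\mat{B}\rangle$ — which again loses the $k$. So the $\sqrt{\log k}$ must come from a different route, namely a chaining/Maurey-type argument: approximate each $\mat{M}\in\mathcal{D}_\lambda^{(k)}$ within a net of cardinality $\approx k^{O(1)}$ so that Massart's finite-class lemma yields $\sqrt{\log(\text{net size})}\approx\sqrt{\log k}$ times the diameter. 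I would use the Maurey sparsification: any convex combination of $k$ of the $\mat{B}$'s is, for sampling purposes, summarized by which $k$ bases appear, but the relevant covering number at the scale dictated by $1/\sqrt{\lfloor n/3\rfloor}$ is polynomial in $k$.

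Given that, the remaining steps are routine: each $\mat{A}_i$ has $\|\mat{A}_i\|$ bounded, hence $\|\mat{A}_i\|_* $ appears paired with $\|\mat{B}\|\le$ (the vertices of $\mathcal{D}_\lambda$ satisfy $\|\mat{P}_\lambda^{(ij)}\|,\|\mat{N}_\lambda^{(ij)}\|\le 2\lambda$ in spectral norm, or one uses the $\|\cdot\|/\|\cdot\|_*$ pairing exactly as in \eqref{eq:holder}), so each term $|\innerp{\mat{A}_i,\mat{B}}|\le 2\lambda B_{\mathcal{X}}$ — the numerical constant $8$ comes from tracking the contraction factor $2$, the Maurey/Massart constant, and this $2\lambda$. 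Massart's lemma over a finite collection of cardinality $N$ with each summand bounded by $c$ gives $\mathbb{E}\max \le c\sqrt{2\log N/m}$ with $m=\lfloor n/3\rfloor$; plugging $N = \mathrm{poly}(k)$, $c = 2\lambda B_{\mathcal{X}}$, and absorbing the polynomial exponent into the constant produces $8\lambda B_{\mathcal{X}}\sqrt{2\log k/\lfloor n/3\rfloor}$.

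The main obstacle I anticipate is precisely the passage that produces $\log k$ rather than $\log(d^2)$: making rigorous that restricting to $k$-sparse convex combinations buys an exponential improvement in the effective class size. The honest tool is a Maurey-type covering bound for the convex hull — a $k$-term convex combination of the $\mat{B}_i$ can itself be approximated (in the relevant dual/spectral geometry, at scale $\sim 1/\sqrt k$ or finer) by a point of a net whose size grows only polynomially in $k$ — together with Massart's finite-class lemma. I would want to double-check the exact net-size exponent and the interaction with the $U$-statistic block structure (the triplets $(\vct{z}_i,\vct{z}_{i+\lfloor n/3\rfloor},\vct{z}_{i+2\lfloor n/3\rfloor})$ are independent across $i$ for fixed blocks, so in fact the Rademacher average here is over $\lfloor n/3\rfloor$ \emph{independent} summands and no decoupling is needed), but those are bookkeeping rather than conceptual hurdles. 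The contraction step and the linear-maximization-at-vertices step are standard; everything hard is concentrated in quantifying the benefit of sparsity $k$.
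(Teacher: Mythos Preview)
Your overall route---peel off the $1$-Lipschitz loss via contraction, then bound the Rademacher average of the resulting \emph{linear} class by a Massart/finite-class argument on the vertices---is exactly the paper's proof. The paper applies the contraction lemma (their reference is Shalev-Shwartz, Lemma~26.9), then invokes Lemmas~26.7--26.8 (Massart plus ``Rademacher of convex hull $=$ Rademacher of vertices'') to pass to a finite set $U$ of $k$ vectors and obtain the $\sqrt{2\log k/\lfloor n/3\rfloor}$ factor; the constant $8\lambda B_{\mathcal X}$ comes from bounding $|\langle \mat{A}_i,\mat{B}\rangle|$ for $\mat{B}\in\mathcal{B}_\lambda$.

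You are right to be uneasy about the step that produces $\log k$ rather than $\log|\mathcal{B}_\lambda|$. After contraction the inner supremum is of a \emph{linear} functional over $\mathcal{D}_\lambda^{(k)}$, and since $\mathcal{B}_\lambda\subset\mathcal{D}_\lambda^{(k)}$ for every $k\ge 1$ while any $k$-term convex combination is dominated by its best vertex, one has
\[
\sup_{\mat{M}\in\mathcal{D}_\lambda^{(k)}}\Big\langle \tfrac{1}{m}\textstyle\sum_i\sigma_i\mat{A}_i,\mat{M}\Big\rangle \;=\; \max_{\mat{B}\in\mathcal{B}_\lambda}\Big\langle \tfrac{1}{m}\textstyle\sum_i\sigma_i\mat{A}_i,\mat{B}\Big\rangle
\]
\emph{independently of $k$}. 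Hence the post-contraction Rademacher average is the same for all $k\ge 1$, and Massart only delivers $\sqrt{\log(2d(d-1))}$. The paper's write-up implicitly treats the $k$ bases as fixed (defining $U=\{\vct{u}_\tau:\tau=1,\dots,k\}$), which is precisely the slip you flagged.

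Your proposed Maurey/covering repair cannot close this gap: any net for $\mathcal{D}_\lambda^{(k)}$ must in particular cover the $2d(d-1)$ distinct points of $\mathcal{B}_\lambda$, so its cardinality cannot be $\mathrm{poly}(k)$ at the scales that matter. More bluntly, the lemma as stated cannot hold for every $k$: take $k=1$, so that $\mathcal{D}_\lambda^{(1)}=\mathcal{B}_\lambda$ and $\mathcal{F}^{(1)}=\{L_{\mat{B}}:\mat{B}\in\mathcal{B}_\lambda\}$ is a nontrivial finite class, yet the claimed bound is $8\lambda B_{\mathcal X}\sqrt{2\log 1/\lfloor n/3\rfloor}=0$. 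What the argument (yours and the paper's) does establish is the same inequality with $\log(2d(d-1))$ in place of $\log k$; this is still $O\!\big(\sqrt{\log d/n}\big)$ and is enough to carry the downstream Theorem~\ref{thm:max_deviations} and Corollary~\ref{cor:excess_risk} up to constants, so the conceptual content survives even though the $k$-adaptive statement does not.
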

\begin{proof}
See \ref{app:rad}.
\end{proof}

There are two important consequences to Lemma~\ref{lem:rademacher}. First, restricting the set of feasible matrices $\mat{M}$ to $\mathcal{D}_\lambda=\mathcal{D}_\lambda^{(2d(d-1))}$ instead of $\mathbb{R}^{d\times d}$ leads to a Rademacher complexity with a very mild $O(\sqrt{\log{d}})$ dependence in the dimension. This validates our design choice for the feasible domain in the high-dimensional setting we consider. Second, the Rademacher complexity can actually be made independent of $d$ by further restricting the number of bases $k$.

Using this result, we derive a bound for the deviation between the expected risk $\mathcal{L}(\mat{M})$ and the empirical risk $\mathcal{L}_S(\mat{M})$ of any $\mat{M}\in\mathcal{D}_\lambda^{(k)}$.

\begin{theorem}[Maximal deviations]
\label{thm:max_deviations}
Let $S$ be a set of of $n$ points drawn i.i.d. from $\mu$, $\lambda>0$ and $1\leq k\leq 2d(d-1)$. For any $\delta>0$, with probability $1-\delta$ we have
\begin{equation}
\sup_{\mat{M}\in\mathcal{D}_\lambda^{(k)}} [\mathcal{L}(\mat{M}) - \mathcal{L}_S(\mat{M}) ] \leq 16\lambda B_{\mathcal{X}}\sqrt{\frac{2\log k}{\lfloor n/3 \rfloor}}  +  3B_{\mathcal{X}}B_{\mathcal{D}_\lambda^{(k)}}\sqrt{\frac{2\ln{(2/\delta)}}{n}},
\end{equation}
where $B_{\mathcal{D}_\lambda^{(k)}} = \sup_{\mat{M}\in\mathcal{D}_\lambda^{(k)}}\|\mat{M}\|_*$.
\end{theorem}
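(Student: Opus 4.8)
The statement is a standard "Rademacher-complexity-plus-concentration" bound, but with the twist that the empirical risk is a $U$-statistic of order $3$ rather than a sum of i.i.d. terms. The plan is to combine Lemma~\ref{lem:rademacher} with a concentration inequality tailored to $U$-statistics. First I would invoke the classical decomposition of a $U$-statistic as an average over permutations of "blocks" of i.i.d. averages (the Hoeffding trick): writing $n' = \lfloor n/3\rfloor$, one has
\[
\mathcal{L}_S(\mat{M}) = \frac{1}{n!}\sum_{\pi} \frac{1}{n'}\sum_{i=1}^{n'} L_{\mat{M}}\bigl(\vct{z}_{\pi(i)},\vct{z}_{\pi(i+n')},\vct{z}_{\pi(i+2n')}\bigr)
\]
(up to the harmless handling of the remainder when $3\nmid n$), so that $\sup_{\mat{M}}[\mathcal{L}(\mat{M})-\mathcal{L}_S(\mat{M})]$ is bounded by the supremum of a convex combination, hence by the worst block, which is a genuine i.i.d. average. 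This is exactly the device used in \citep{Clemencon2008a,Cao2012a}, which the paper already cites for the form of the Rademacher complexity \eqref{eq:rademacher}.

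Next I would apply the bounded-difference (McDiarmid) inequality to the functional $\Phi(S) = \sup_{\mat{M}\in\mathcal{D}_\lambda^{(k)}}[\mathcal{L}(\mat{M}) - \mathcal{L}_S(\mat{M})]$. Changing one point $\vct{z}_i$ affects at most $O(n^2)$ of the $n(n-1)(n-2)$ triplet terms in \eqref{eq:emprisk}, and by the Hölder bound \eqref{eq:holder} each term is bounded in absolute value by $B_{\mathcal{X}} B_{\mathcal{D}_\lambda^{(k)}}$; hence the bounded-difference constant is $c_i = O\bigl(B_{\mathcal{X}} B_{\mathcal{D}_\lambda^{(k)}}/n\bigr)$. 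McDiarmid then gives, with probability at least $1-\delta$,
\[
\Phi(S) \le \mathbb{E}_S[\Phi(S)] + C\, B_{\mathcal{X}} B_{\mathcal{D}_\lambda^{(k)}}\sqrt{\frac{\ln(1/\delta)}{n}}
\]
for an absolute constant $C$; tracking constants carefully (the range of each summand is $2B_{\mathcal{X}}B_{\mathcal{D}_\lambda^{(k)}}$, and the reduction to $n'$ i.i.d. blocks costs the usual factors) is what produces the stated $3B_{\mathcal{X}}B_{\mathcal{D}_\lambda^{(k)}}\sqrt{2\ln(2/\delta)/n}$ term.

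Finally I would bound $\mathbb{E}_S[\Phi(S)]$ by symmetrization. Using the $U$-statistic-to-i.i.d.-block reduction again, $\mathbb{E}_S[\Phi(S)] \le 2 R_n(\mathcal{F}^{(k)})$ by the standard symmetrization argument applied within a single block (the convexity of the supremum lets one pull the permutation average outside and reduce to one block), and then Lemma~\ref{lem:rademacher} gives $R_n(\mathcal{F}^{(k)}) \le 8\lambda B_{\mathcal{X}}\sqrt{2\log k/\lfloor n/3\rfloor}$, yielding the leading $16\lambda B_{\mathcal{X}}\sqrt{2\log k/\lfloor n/3\rfloor}$ term. Assembling the two pieces gives the theorem.

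**Main obstacle.** The only genuinely delicate point is the bookkeeping around the $U$-statistic structure: one must justify that both the McDiarmid step and the symmetrization step can be reduced cleanly to the i.i.d.-block picture without losing more than constant factors, and in particular that the bounded-difference coefficients are $O(1/n)$ rather than $O(1/n^2)$ despite there being $\Theta(n^3)$ summands (the point being that only $\Theta(n^2)$ of them involve any given $\vct{z}_i$, and each is $O(1/n^3)$ after normalization, but one such term changing contributes its full range — so the accounting must be done per-coordinate, not naively). Everything else — Hölder, convexity of the sup, the symmetrization inequality — is routine.
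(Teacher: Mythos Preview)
Your proposal is correct and follows essentially the same route as the paper: McDiarmid applied directly to $\Phi(S)=\sup_{\mat{M}\in\mathcal{D}_\lambda^{(k)}}[\mathcal{L}(\mat{M})-\mathcal{L}_S(\mat{M})]$ with bounded differences $6B_{\mathcal{X}}B_{\mathcal{D}_\lambda^{(k)}}/n$ (each point appears in $O(n^2)$ of the $n(n-1)(n-2)$ ordered triples), then the Hoeffding block decomposition plus symmetrization to bound $\mathbb{E}_S[\Phi(S)]\le 2R_n(\mathcal{F}^{(k)})$, and finally Lemma~\ref{lem:rademacher}. One small clarification: the block reduction is used \emph{only} for the symmetrization/expectation step, not for the McDiarmid step---the constant $3$ in the concentration term comes purely from counting how many triplet terms contain a fixed sample point, so your parenthetical about ``the reduction to $n'$ i.i.d.\ blocks costs the usual factors'' in that context is unnecessary.
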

\begin{proof}
See \ref{app:maxdev}.
\end{proof}

The generalization bounds given by Theorem~\ref{thm:max_deviations} exhibit a standard $O(1\sqrt{n})$ rate. They also confirm that restricting the number $k$ of bases is a good strategy to guard against overfitting when the feature dimension $d$ is high. Interestingly, note that due to the convex hull structure of our basis set, $B_{\mathcal{D}_\lambda^{(k)}} = \sup_{\mat{M}\in\mathcal{D}_\lambda^{(k)}}\|\mat{M}\|_*$ can be easily bounded by a quantity independent of $d$ for any $k\geq 1$ and any dual norm $\|\cdot\|_*$. We thus have complete freedom to choose the primal norm $\|\cdot\|$ so as to make $B_{\mathcal{X}}= \sup_{\vct{x},\vct{x}',\vct{x}''\in\mathcal{X}}\|\vct{x}(\vct{x}'-\vct{x}'')^T\|$ as small as possible. A good choice of primal norm is the infinity norm $\|\mat{A}\|_\infty = \max_{i,j=1}^d|A_{i,j}|$, which is independent of $d$. For instance, if $\mathcal{X}=[0,1]^d$ we have $B_{\mathcal{X}}=1$. The dual norm of the infinity norm being the $L_1$ norm, we then have for any $k\geq 1$:
\begin{equation}
\label{eq:l1_bound}
B_{\mathcal{D}_\lambda^{(k)}} = \sup_{\mat{M}\in\mathcal{D}_\lambda^{(k)}}\|\mat{M}\|_1 = \sup_{\mat{M}\in\mathcal{D}_\lambda^{(k)}}\sum_{i,j=1}^d |M_{i,j}|\leq 4\lambda.
\end{equation}

Theorem~\ref{thm:max_deviations} is directly comparable to the results of \citet{Cao2012a}, who derived generalization bounds for similarity learning under various norm regularizers. Their bounds have a similar form, but exhibit a dependence on the feature dimension $d$ which is at least logarithmic (sometimes even linear, depending on the norm used to regularize the empirical risk). In contrast, our bounds depend logarithmically on $k\ll d$. This offers more flexibility in the high-dimensional setting because $k$ can be directly controlled by stopping our algorithm after $k \ll d$ iterations to guarantee that the output is in $\mathcal{D}_\lambda^{(k)}$. This is highlighted by the following corollary, which combines the generalization bounds of Theorem~\ref{thm:max_deviations} with the $O(1/k)$ convergence rate of our Frank-Wolfe optimization algorithm (Proposition~\ref{prop:converge}).

\begin{cor}[Excess risk bound]
\label{cor:excess_risk}
Let $S$ be a set of $n$ points drawn i.i.d. from $\mu$, $\lambda>0$. Given $k\in\{1,\dots,2d(d-1)\}$, let $\mat{M}^{(k)}$ be the solution returned after $k$ iterations of Algorithm~\ref{alg:fw} applied to the problem $\min_{\mat{M}\in\mathcal{D}_\lambda}\mathcal{L}_S(\mat{M})$, and let $\mat{M}^*\in \argmin_{\mat{M}\in\mathcal{D}_\lambda} \mathcal{L}(\mat{M})$ be the expected risk minimizer over $\mathcal{D}_\lambda$. For any $\delta>0$, with probability $1-\delta$ we have
$$\mathcal{L}(\mat{M}^{(k)}) - \mathcal{L}(\mat{M}^*) \leq \frac{16L\lambda^2}{k+2} + 16\lambda B_{\mathcal{X}}\sqrt{\frac{2\log k}{\lfloor n/3 \rfloor}}  +  5B_{\mathcal{X}}B_{\mathcal{D}_\lambda^{(k)}}\sqrt{\frac{\ln{(4/\delta)}}{n}}.$$
\end{cor}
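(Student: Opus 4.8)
The plan is to combine the $O(1/k)$ optimization guarantee of Proposition~\ref{prop:converge} with the uniform deviation bound of Theorem~\ref{thm:max_deviations} through a classical excess-risk decomposition, the only subtlety being that the empirical objective minimized here is the $U$-statistic $\mathcal{L}_S$ rather than the triplet average $f$ of \eqref{eq:form}. Let $\mat{M}_S^*\in\argmin_{\mat{M}\in\mathcal{D}_\lambda}\mathcal{L}_S(\mat{M})$ be the empirical risk minimizer. Since $\mat{M}^{(k)}\in\mathcal{D}_\lambda^{(k)}$ and $\mat{M}^*,\mat{M}_S^*\in\mathcal{D}_\lambda$, I would start from
\begin{align*}
\mathcal{L}(\mat{M}^{(k)}) - \mathcal{L}(\mat{M}^*)
&= \big[\mathcal{L}(\mat{M}^{(k)}) - \mathcal{L}_S(\mat{M}^{(k)})\big]
+ \big[\mathcal{L}_S(\mat{M}^{(k)}) - \mathcal{L}_S(\mat{M}_S^*)\big]\\
&\quad\; + \big[\mathcal{L}_S(\mat{M}_S^*) - \mathcal{L}_S(\mat{M}^*)\big]
+ \big[\mathcal{L}_S(\mat{M}^*) - \mathcal{L}(\mat{M}^*)\big],
\end{align*}
where the third bracket is $\le 0$ because $\mat{M}_S^*$ minimizes $\mathcal{L}_S$ over $\mathcal{D}_\lambda\ni\mat{M}^*$.

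I would then bound the remaining three brackets. The first is at most $\sup_{\mat{M}\in\mathcal{D}_\lambda^{(k)}}[\mathcal{L}(\mat{M})-\mathcal{L}_S(\mat{M})]$ because $\mat{M}^{(k)}\in\mathcal{D}_\lambda^{(k)}$, so Theorem~\ref{thm:max_deviations} applied with confidence $\delta/2$ bounds it, with probability $1-\delta/2$, by $16\lambda B_{\mathcal{X}}\sqrt{2\log k/\lfloor n/3\rfloor}+3B_{\mathcal{X}}B_{\mathcal{D}_\lambda^{(k)}}\sqrt{2\ln(4/\delta)/n}$; note this holds even though $\mat{M}^{(k)}$ depends on $S$, since the bound is uniform over $\mathcal{D}_\lambda^{(k)}$. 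The second bracket is the optimization error of Algorithm~\ref{alg:fw} run on $\min_{\mat{M}\in\mathcal{D}_\lambda}\mathcal{L}_S(\mat{M})$: the proof of Proposition~\ref{prop:converge} uses only convexity of the objective, $L$-Lipschitzness of its gradient on $\mathcal{D}_\lambda$ with respect to $\|\cdot\|_F$, and $\diam_{\|\cdot\|_F}(\mathcal{D}_\lambda)=\sqrt8\lambda$, and $\mathcal{L}_S$ is an average over ordered triples $(i,j,k)$ of distinct indices of the convex, smooth terms $\mathbbm{I}[y_i=y_j\neq y_k]\,\ell(\innerp{\vct{x}_i(\vct{x}_j-\vct{x}_k)\T,\mat{M}})$ (those for non-admissible triples being identically zero); hence the same case analysis as in that proof gives that $\nabla\mathcal{L}_S$ is $L$-Lipschitz with $L=\frac{1}{n(n-1)(n-2)}\sum_{i\neq j\neq k}\mathbbm{I}[y_i=y_j\neq y_k]\,\|\vct{x}_i(\vct{x}_j-\vct{x}_k)\T\|_F^2$, so $\mathcal{L}_S(\mat{M}^{(k)})-\mathcal{L}_S(\mat{M}_S^*)\le 16L\lambda^2/(k+2)$. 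For the fourth bracket, $\mat{M}^*$ is a fixed, sample-independent element of $\mathcal{D}_\lambda$, and $\mathcal{L}_S(\mat{M}^*)$ is an order-three $U$-statistic whose kernel lies in $[0,\,B_{\mathcal{X}}\|\mat{M}^*\|_*]$ by \eqref{eq:holder}; moreover $\|\mat{M}^*\|_*\le\sup_{\mat{M}\in\mathcal{D}_\lambda}\|\mat{M}\|_*=\max_{\mat{B}\in\mathcal{B}_\lambda}\|\mat{B}\|_*=B_{\mathcal{D}_\lambda^{(k)}}$ (the first equality since a norm attains its maximum over a convex hull at a vertex, the second since $B_{\mathcal{D}_\lambda^{(j)}}$ is the same for every $j\ge1$). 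A Hoeffding-type concentration inequality for $U$-statistics (via the classical reduction to an average of $\lfloor n/3\rfloor$ i.i.d.\ terms, as in the definition of $R_n$ in \eqref{eq:rademacher}; see e.g.\ \citep{Clemencon2008a}) then yields, with probability $1-\delta/2$, $\mathcal{L}_S(\mat{M}^*)-\mathcal{L}(\mat{M}^*)\le B_{\mathcal{X}}B_{\mathcal{D}_\lambda^{(k)}}\sqrt{\ln(4/\delta)/(2\lfloor n/3\rfloor)}$.

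Summing the four bounds and taking a union bound over the two probability-$(1-\delta/2)$ events gives the claimed inequality with probability at least $1-\delta$, after merging the two $O(1/\sqrt n)$ contributions into a single $c\,B_{\mathcal{X}}B_{\mathcal{D}_\lambda^{(k)}}\sqrt{\ln(4/\delta)/n}$ term (using $\lfloor n/3\rfloor\ge n/6$ and relaxing the numerical constant to $c=5$). I do not anticipate a real obstacle: the one step needing genuine care is verifying that Proposition~\ref{prop:converge} transfers to $\mathcal{L}_S$ despite its $U$-statistic form and the admissibility indicator — which it does, since zeroing or dropping terms only decreases the Lipschitz constant and preserves convexity — together with choosing the concentration bound for the fixed quantity $\mathcal{L}_S(\mat{M}^*)$ so as to exploit the $U$-statistic structure and avoid reintroducing a dependence on $d$ (a crude bounded-difference argument over all of $\mathcal{D}_\lambda$ would). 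Everything else is constant-tracking.
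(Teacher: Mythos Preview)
Your proposal is correct and follows essentially the same route as the paper's proof: the identical four-term excess-risk decomposition, Theorem~\ref{thm:max_deviations} (at level $\delta/2$) for the generalization term, Proposition~\ref{prop:converge} for the optimization term, and a Hoeffding-type bound for the fixed point $\mat{M}^*$, combined by a union bound. You are in fact more careful than the paper in two places---you justify explicitly why Proposition~\ref{prop:converge} transfers to the $U$-statistic objective $\mathcal{L}_S$, and you invoke the $U$-statistic version of Hoeffding for the last term (the paper simply writes ``Hoeffding's inequality'' and obtains $\sqrt{\ln(4/\delta)/(2n)}$); the only casualty is that with your $\lfloor n/3\rfloor$-based bound and the crude relaxation $\lfloor n/3\rfloor\ge n/6$, the merged constant comes out closer to $3\sqrt{2}+\sqrt{3}\approx 5.97$ than to $5$, so you would need to tighten that step slightly (or accept a constant $6$) to match the stated corollary exactly.
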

\begin{proof}
See \ref{app:excess}.
\end{proof}

Corollary~\ref{cor:excess_risk} shows that the excess risk with respect to the expected risk minimizer $\mat{M}^*$ depends on a trade-off between optimization error and complexity of the hypothesis class. Remarkably, this trade-off is ruled by the number $k$ of iterations of the algorithm: as $k$ increases, the optimization error term decreases but the Rademacher complexity terms gets larger. We thus obtain an excess risk bound which adapts to the actual sparsity of the solution output by our algorithm. This is in accordance with our overall goal of reducing overfitting by allowing a strict control on the complexity of the learned similarity, and justifies an early-stopping strategy to achieve a good reduction in empirical risk by selecting the most useful bases while keeping the solution complexity small enough. Again, the excess risk is independent of the feature dimension $d$, suggesting that in the high-dimensional setting it is possible to find sparse solutions with small excess risk. To the best of our knowledge, this is the first result of this nature for metric or similarity learning.

\begin{rem}[Approximation of empirical risk by subsampling]
The empirical risk \eqref{eq:emprisk} is a sum of $O(n^3)$ term, which can be costly to minimize in the large-scale setting. To reduce the computational cost, an alternative to the mini-batch strategy described in Section~\ref{sec:approx} is to randomly subsample $M$ terms of the sum (e.g., uniformly without replacement) and to solve the resulting approximate empirical risk minimization problem. For general problems involving $U$-statistics, \citet{Clemencon2016a} showed that sampling only $M=O(n)$ terms is sufficient to maintain the $O(1/\sqrt{n})$ rate. These arguments can be adapted to our setting to obtain results similar to Theorem~\ref{thm:max_deviations} and Corollary~\ref{cor:excess_risk} for this subsampled empirical risk.
\end{rem}


\section{Experiments}
\label{sExp}


In this section, we present experiments to evaluate the performance and robustness of \hdsl. In Section~\ref{sec:synth}, we use synthetic data to study the performance of our approach in terms of similarity recovery and generalization in high dimensions in a controlled environment.
Section~\ref{sec:real} evaluates our algorithm against competing approaches on classification and dimensionality reduction using real-world datasets.
Our Matlab code is publicly available on GitHub under
GNU/GPL 3 license.\footnote{\url{https://github.com/bellet/HDSL}} 

\subsection{Experiments on Synthetic Data}
\label{sec:synth}

We first conduct experiments on synthetic datasets in order to address two questions:
\begin{enumerate}
\item Is the algorithm able to recover the ground truth sparse similarity function from (potentially weak) similarity judgments?
\item How well does the algorithm generalize as the dimensionality increases?
\end{enumerate}

\subsubsection{Similarity Recovery}

To investigate the algorithm's ability to recover the underlying similarity, we generate a ground truth similarity metric $M\in \mathbb{R}^{d\times d}$ where $d=2000$. $M$ is constructed as a convex combination of $100$ randomly selected rank-one 4-sparse bases as specified in Section~\ref{sec:form}. The combination coefficients are drawn from a Dirichlet distribution with shape parameter $9$ and scale parameter $0.5$. Without loss of generality, we choose the metric to be block structured by restricting the basis selection from two blocks. This makes the resulting matrix easier to visualize, as show in Figure~\ref{fTrueMetric}.

We then generate $5000$ training samples from the uniform distribution on $[0,1]$ with $2\%$ sparsity. From this sample, we create $30,000$ training triplets $\{(x_1,x_2,x_3)\}$ where $x_1$ is randomly picked and $x_2$ (or $x_3$) is sampled among $x_1$'s top $\alpha$\% similar (or dissimilar) samples as measured by the ground truth metric $M$. The parameter $\alpha$ controls the ``quality'' of the triplet constraints: a larger $\alpha$ leads to less similar (or dissimilar) samples in the triplets, thereby providing a weaker signal about the underlying similarity. We experiment with various $\alpha$ (10\%, 20\%, 25\%, 30\%) to investigate the robustness of \hdsl to the quality of the supervision.
In all our experiments, we use $\lambda=100$.

\begin{figure}[t]
\centering
\subfigure[Feature recovery AUC]{\includegraphics[width=0.45\textwidth]{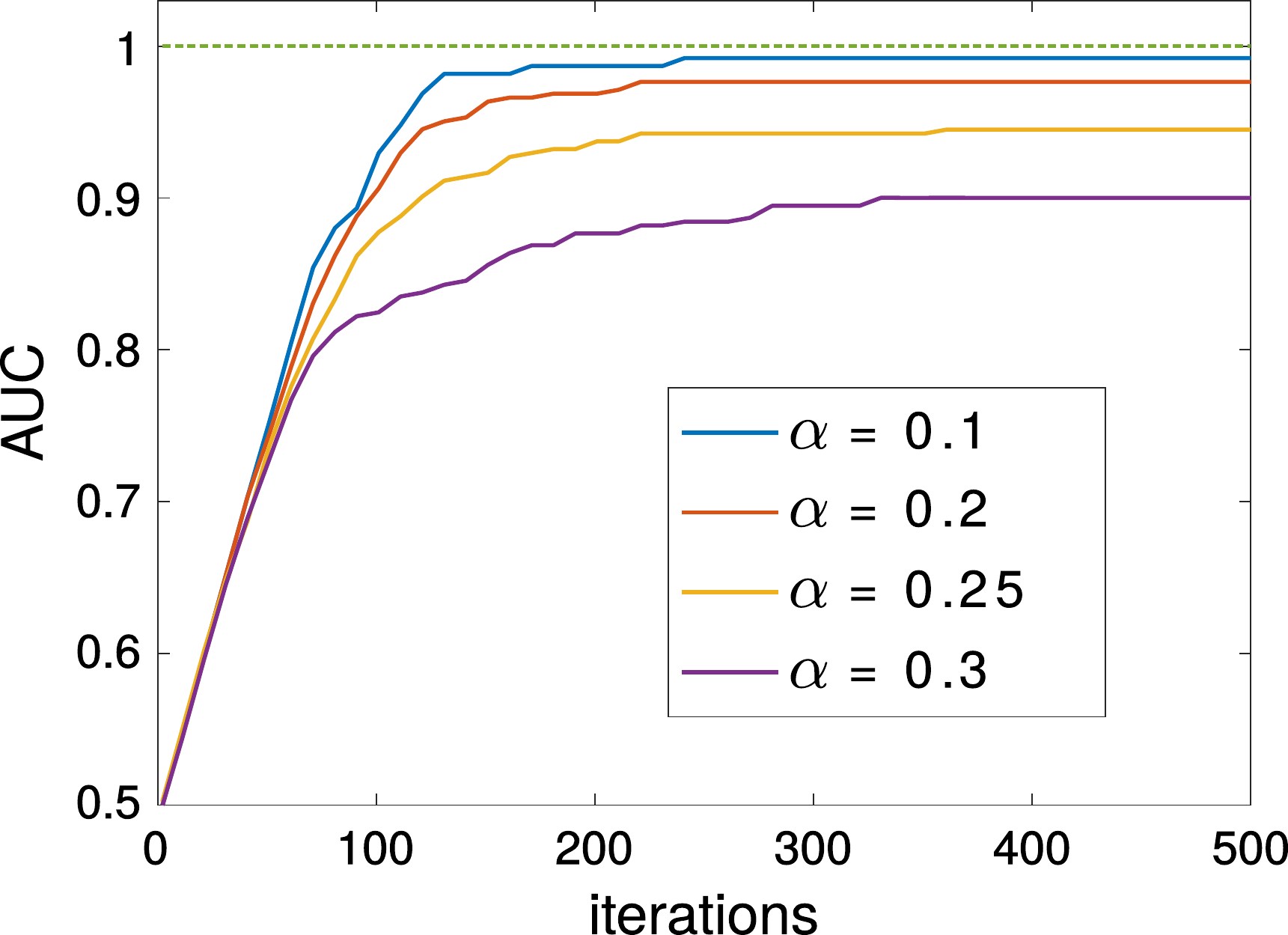}
\label{fAUC}}
\subfigure[Entry recovery AUC]{\includegraphics[width=0.45\textwidth]{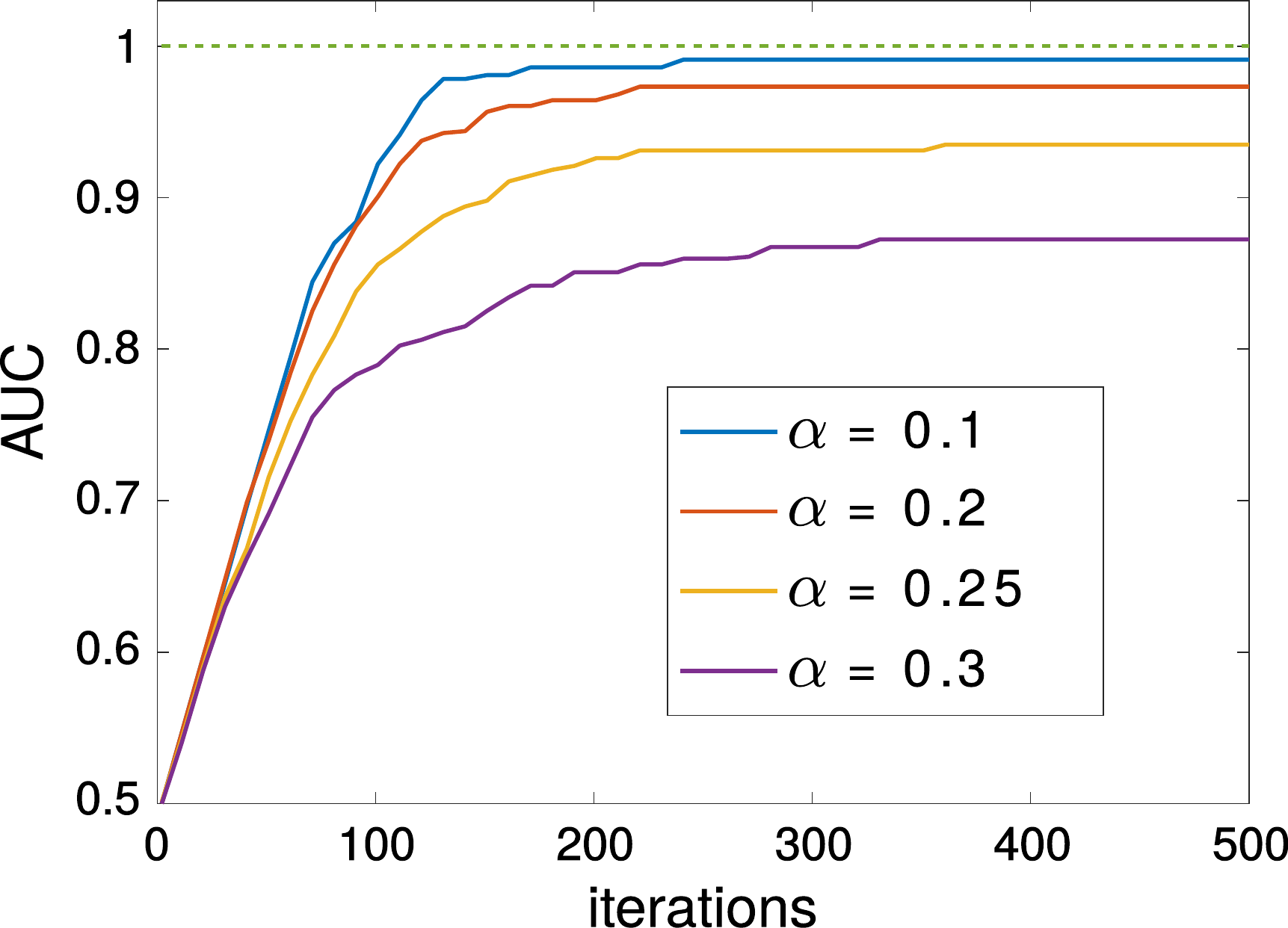}
\label{fAUC_entry}}
 \caption{Similarity recovery experiment on synthetic data. Figure~\ref{fAUC} and Figure~\ref{fAUC_entry} show the AUC scores (for feature recovery and entry recovery respectively) along the iterations of the algorithm for different values of $\alpha$.}\label{fMetricRecovery_AUC}
\end{figure}

\begin{figure}[t]
\centering
\subfigure[True similarity]{\includegraphics[width=0.45\textwidth]{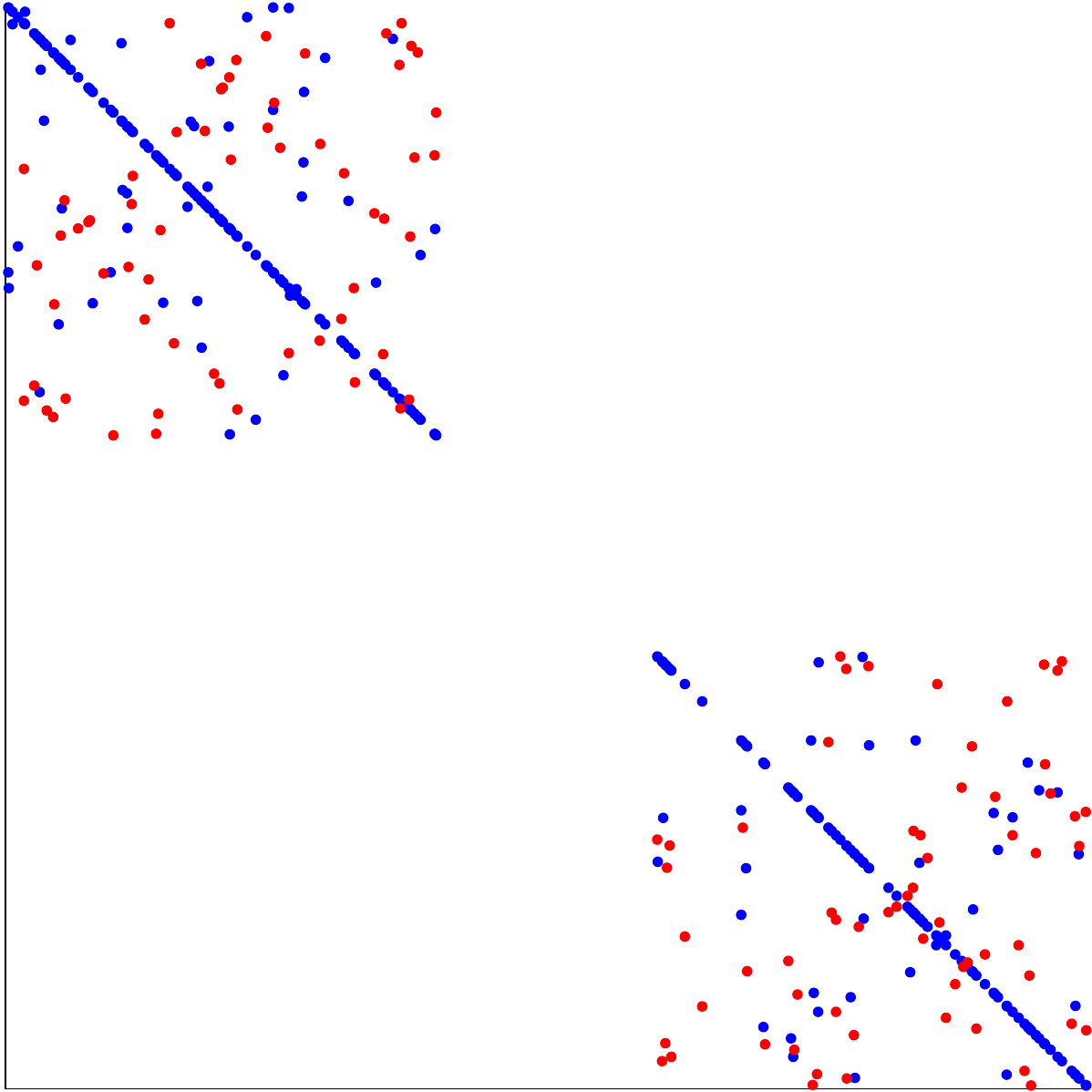}
\label{fTrueMetric}}
\subfigure[Learned similarity ($\alpha=20$\%)]{\includegraphics[width=0.45\textwidth]{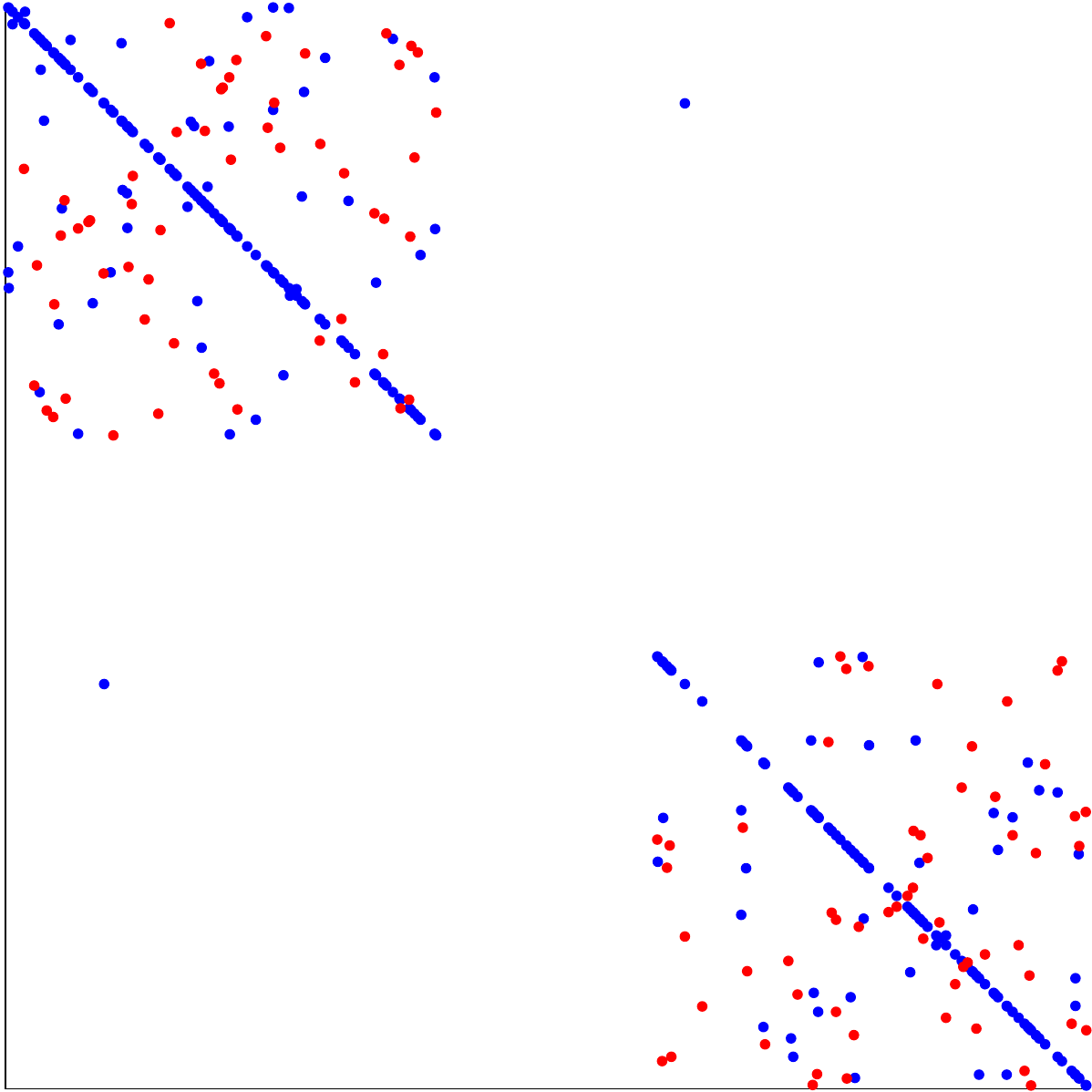}
\label{fLearned}}
 \caption{Similarity recovery experiment on synthetic data. Figure~\ref{fTrueMetric} shows the underlying ground truth similarity, where blue dots represent positive entries and red dots represent negative entries (combination coefficients are not displayed). Figure~\ref{fLearned} shows the similarity learned by \hdsl ($\alpha=$20\%), which is visually very close to the ground truth.}\label{fMetricRecovery_matrix}
\end{figure}

\paragraph{Results} 
We aim to measure how accurately we recover the entries (i.e., pairs of features) that are active in the ground truth similarity as training proceeds. To do so, at each iteration $k$ of \hdsl, we rank each pair of features by descending order of the absolute value of the corresponding entry in the current matrix $\mat{M}^{(k)}$. We then compute the Area under the ROC Curve (AUC) of the ranking induced by the similarity with respect to the list of active entries in the ground truth similarity. The AUC is well-suited to the imbalanced setting (as active entries in the ground truth are a small subset of all entries). It can be interpreted as the probability that a random entry that is active in the ground truth is ranked higher than a random inactive one. Following a similar process, we also compute an AUC score for individual features: this is done by ranking each feature by the $L_1$ norm of its associated row in the matrix.

The AUC scores for feature and entry recovery along the iterations are reported in Figure~\ref{fMetricRecovery_AUC} for different values of $\alpha$. When the quality of the triplet constraints is high ($\alpha=$10\%,20\%), the AUC increases quickly to converge very close to $1.0$, indicating an almost perfect recovery of relevant features/entries. This confirms that \hdsl is able to accurately identify the small number of correct features and pairs of features. As $\alpha$ increases (i.e., the similarity constraints become noisy and less informative), the AUC increases at a slower pace and the final value decreases. This is expected as the quality of the information carried by the similarity judgments is key to recover the ground truth similarity. Yet, even for $\alpha=$30\%, the final AUC score is still very high (above $0.85$ for both feature and entry recovery).
This good recovery behavior is confirmed by the visual representations of the ground truth and learned similarity matrices shown in Figure~\ref{fMetricRecovery_matrix}. We observe that the learned similarity (when $\alpha=20$\%) clearly recovers the block structure of the true similarity, and is able to correctly identify most individual entries with very few false positives.

\subsubsection{Link Prediction}

We now investigate the ability of our algorithm to generalize well as the feature dimensionality increases by conducting a signed link prediction experiment, which is the task of distinguishing positive and negative interactions in a network \citep[see e.g.][]{agrawal2013link}.

We generate $500$ samples with different number of features $d$ ranging from $5,000$ to $1,000,000$. As the dimension $d$ increases, we decrease the average sparsity of data (from $0.02$ to $0.002$) to limit running time.
In real high-dimensional datasets, features typically do not appear in a uniform frequency: instead, a small portion of features tends to dominate the others.  Following this observation, we generate features whose frequency follow a power law style distribution, as shown in Figure~\ref{f_dimfreq}.
The ground truth similarity is then a convex combination of randomly selected bases as in the previous experiment, except that we restrict the selected bases to those involving features that are frequent enough (a frequency of at least 0.1 was chosen for this experiment). This is needed to ensure that the features involved in the ground truth similarity will occur at least a few times in our small dataset, but we emphasize that the algorithm is exposed to the entire feature set and does not know which features are relevant. 

Based on the samples and the ground truth similarity, we generate signed link observations of the form $\{x^i_1, x^i_2, y^i\}_i^N$ ($y^i\in\{-1, 1\}$). We associate the label $y^i=1$ (positive link) to pairs for which the similarity between $x_1$ and $x_2$ ranks in the top 5\% of $x_1$'s (or $x_2$'s) neighbors according to the ground truth similarity measure. On the other hand, $y^i=-1$ (negative link) indicates that the similarity ranks in the bottom 5\% of $x_1$'s (or $x_2$'s) neighbors. We split these link observations into training, validation and test sets of $1,000$ observations each. Triplets constraints are generated from training links --- given a pair $x_1, x_2, y$, we randomly sample $x_3$ as a similar (if $y=-1$) or dissimilar (if $y = 1$) node. The validation set is used to tune the hyperparameter $\lambda$ and for early stopping.




\begin{figure}[t]
\centering
\subfigure[Feature frequency ($d=50,000$)]{\includegraphics[width=0.43\textwidth]{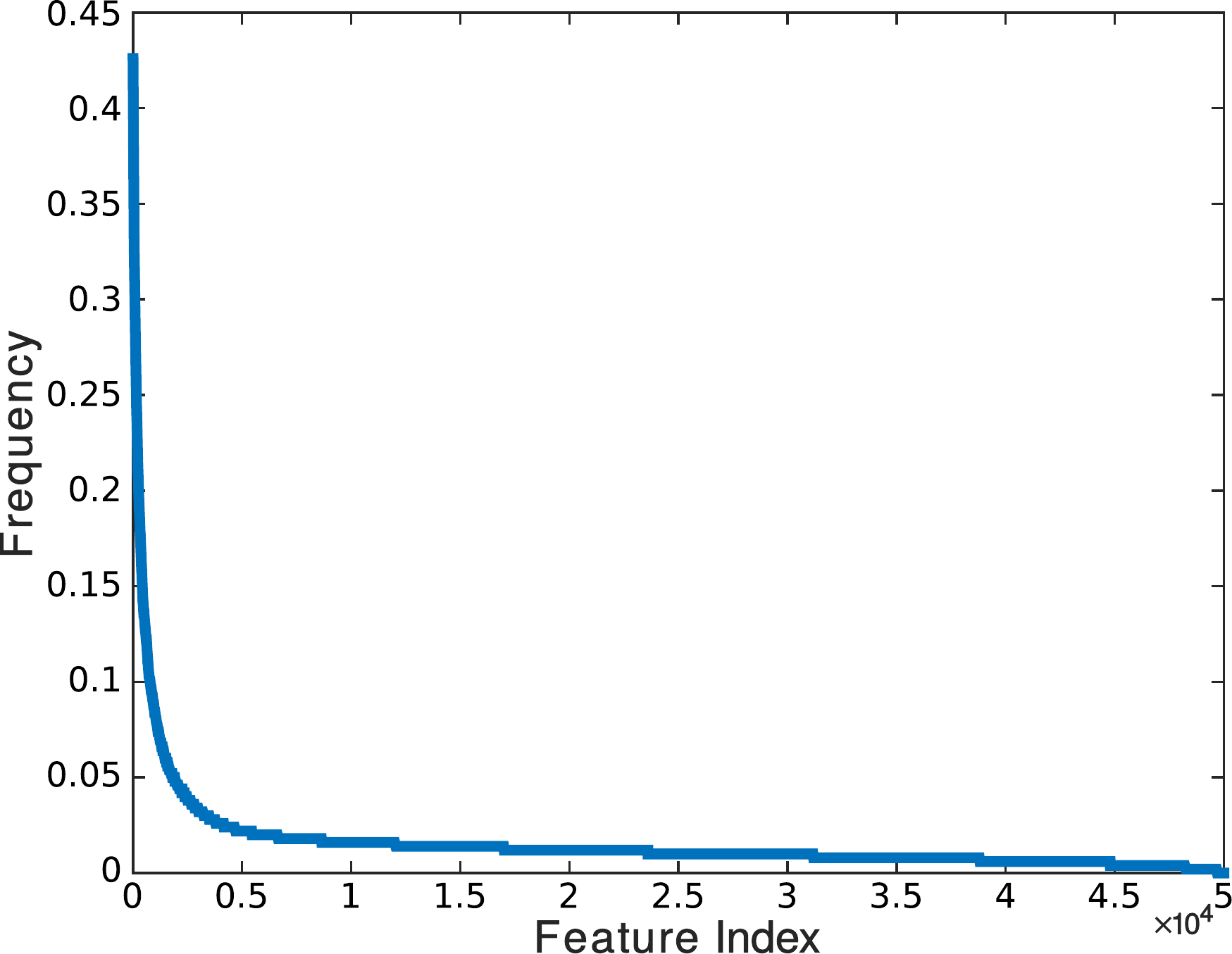}\label{f_dimfreq}}\hspace*{.5cm}
\subfigure[AUC scores on the test set]{\includegraphics[width=0.52\textwidth]{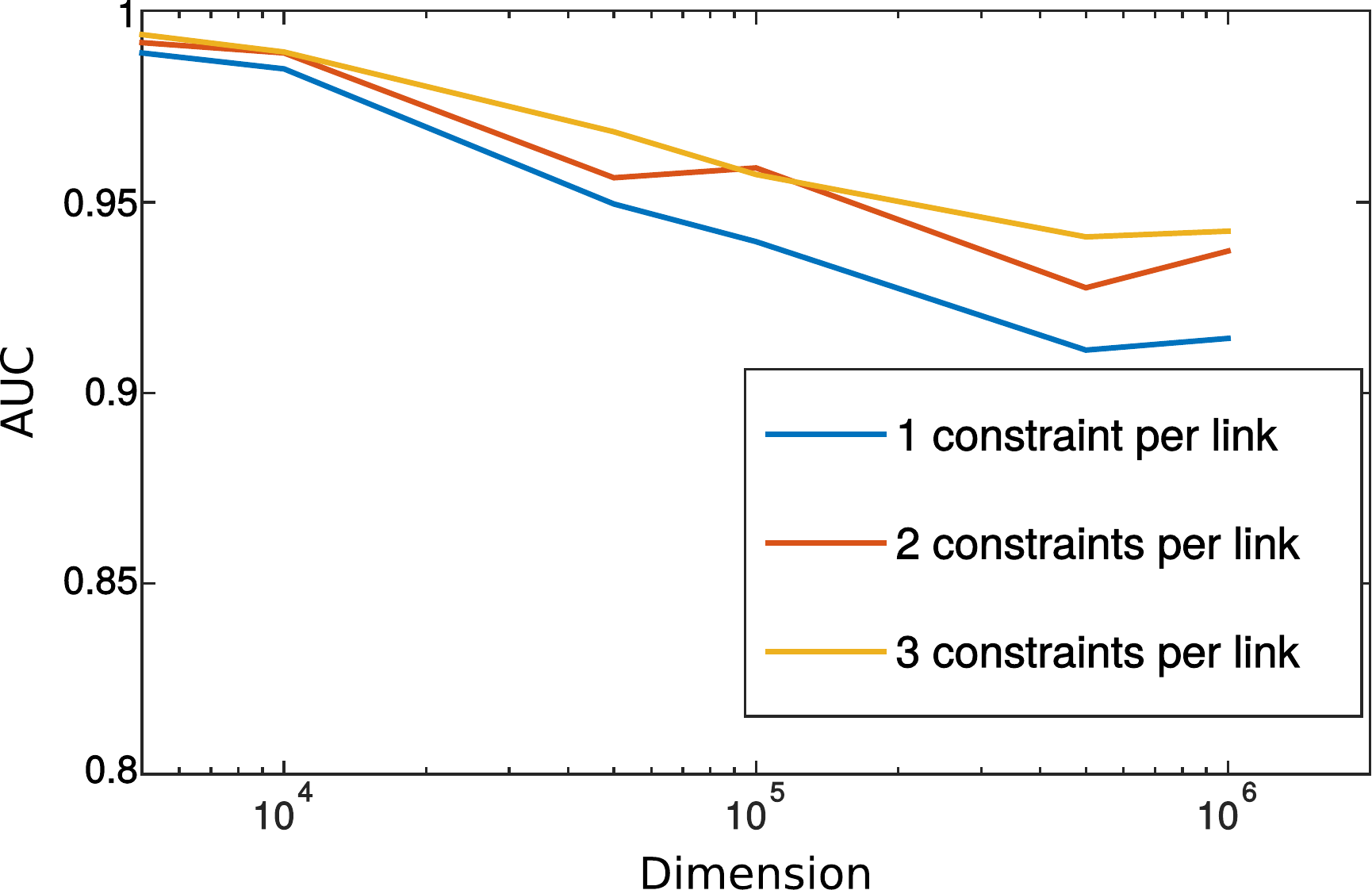}\label{fAUC2}}
\caption{Link prediction experiment on synthetic data. Figure~\ref{f_dimfreq} shows the feature frequency distribution, which follows a power law as in many real high-dimensional datasets. Figure~\ref{fAUC2} shows AUC scores on the test set for different number of features (in log scale) and number of training constraints per link.}\label{fLink}
\end{figure}

\paragraph{Results}
We measure the generalization ability of \hdsl by the AUC score of link prediction on the test set. Figure~\ref{fAUC2} reports these AUC scores across different dimensions. We also show results for different numbers of constraints per training link. The results are averaged over 5 random runs.
As one would expect, the task becomes increasingly difficult as the dimension becomes larger, since the size of the training set is fixed ($1,000$ training links generated from $500$ nodes). However, the performance decreases slowly (roughly logarithmically) with the dimension, and we achieve very high AUC scores (larger than 0.9) even for one million features. We also see that training from more constraints tends to improve the prediction performance. 


\subsection{Experiments on Real Datasets}
\label{sec:real}

We now present comparative experiments on several high-dimensional real datasets, evaluating our approach against several baselines and competing methods.

\subsubsection{Setup}

\paragraph{Datasets} We report experimental results on several real-world classification datasets with up to 100,000 features. Dorothea and dexter come from the NIPS 2003 feature selection challenge \citep{Guyon2004} and are respectively pharmaceutical and text data with predefined splitting into training, validation and test sets. They both contain a large proportion of noisy/irrelevant features. Reuters CV1 is a popular text classification dataset with bag-of-words representation. We use the binary classification version from the LIBSVM dataset collection\footnote{\url{http://www.csie.ntu.edu.tw/~cjlin/libsvmtools/datasets/}} (with 60\%/20\%/20\% random splits) and the 4-classes version (with 40\%/30\%/30\% random splits) introduced by \citet{Cai2012}. Detailed information on the datasets and splits is given in Table \ref{tDatasets}. All datasets are normalized such that each feature takes values in $[0,1]$.

\begin{table}[!t]
\centering
\small
\begin{tabular}{|c||c|c|c|c|c|}
\hline
Datasets     & Dimension & Sparsity & Training size & Validation size & Test size \\ \hline\hline
dexter       & 20,000       & 0.48\%   & 300           & 300        & 2,000      \\ \hline
dorothea     & 100,000      & 0.91\%   & 800           & 350        & 800       \\ \hline
rcv1\_2 & 47,236    & 0.16\%   & 12,145         & 4,048       & 4,049      \\ \hline
rcv1\_4      & 29,992    & 0.26\%   & 3,850          & 2,888       & 2,887      \\ \hline
\end{tabular}
\caption{Datasets used in the experiments}
\label{tDatasets}
\end{table}


\paragraph{Competing methods} We compare the proposed approach (\textsc{hdsl}) to several methods:
\begin{itemize}
\item \ident: The standard dot product, which is equivalent to setting $\mat{M} = \mat{I}$. 
\item \mldiag: Diagonal similarity learning (i.e., a weighting of the features), as done in \cite{Gao2014}. We obtain it by minimizing the same loss as in \hdsl with $\ell_2$ and $\ell_1$ regularization, i.e.,
\begin{equation*}
\min_{\vct{w}\in\mathbb{R}^d} \quad f(\vct{w}) = \frac{1}{T}\sum_{t=1}^T \ell\left(\innerp{\mat{A}^t,\diag(\vct{w})}\right)+\lambda\Omega(\vct{w}),
\end{equation*}
where $\Omega(\vct{w}) \in \{\|\vct{w}\|_2^2,\|\vct{w}\|_1\}$ and $\lambda$ is the regularization parameter. Optimization was done using (proximal) gradient descent.
\item \rpml: Similarity learning in random projected space. Given $r\ll d$, let $\mat{R}\in\mathbb{R}^{d\times r}$ be a matrix where each entry $r_{ij}$ is randomly drawn from $\mathcal{N}(0,1)$. For each data instance $\vct{x}\in\mathbb{R}^d$, we generate $\tilde{\vct{x}} =\frac{1}{\sqrt{r}}\mat{R}\T\vct{x} \in \mathbb{R}^r$ and use this reduced data in OASIS \citep{Chechik2009}, a fast online method to learn a bilinear similarity from triplet constraints.
\item \pcaml: Similarity learning in PCA space. Same as \rpml, except that PCA is used instead of random projections to project the data into $\mathbb{R}^r$.
\item \svm: Support Vector Machines. We use linear SVM, which is known to perform well for sparse high-dimensional data \citep{Caruana2008}, with $\ell_2$ and $\ell_1$ regularization. We also use nonlinear SVM with the polynomial kernel (2nd and 3rd degree) popular in text classification \citep{Chang2010}. The SVM models are trained using liblinear \citep{Fan2008} and libsvm \citep{Chang2011} with 1vs1 paradigm for multiclass.
\end{itemize}

We have also tried to compare our method with \textsc{Comet} \citep{Atzmon2015}, which also learns a bilinear similarity in a greedy fashion with rank-1 updates. However, as mentioned in Section~\ref{sec:related_metric} their coordinate descent algorithm has a time complexity of $O(d^2)$ per iteration, as well as overall memory complexity of $O(d^2)$. We run the sparse version of code provided by the authors\footnote{\url{https://github.com/yuvalatzmon/COMET}} on a machine with a 2.3GHz Intel Core i7 and 16GB memory. On the dexter dataset (which has the smallest dimensionality in our benchmark), a single pass over the features took more than 4 hours, while the authors reported that about 10 passes are generally needed for \textsc{Comet} to converge \citep{Atzmon2015}. On the dorothea dataset, \textsc{Comet} returned a memory error. As a result, we did not include \textsc{Comet} to our empirical comparison. In contrast, on the same hardware, our approach \hdsl takes less than 1 minute on dexter and less than 1 hour on dorothea.


\paragraph{Training Procedure} For all similarity learning algorithms, we generate 15 training constraints for each instance by identifying its 3 target neighbors (nearest neighbors with same label) and 5 impostors (nearest neighbors with different label), following \cite{Weinberger2009}. Due to the very small number of training instances in dexter, we found that better performance is achieved across all methods when using 20 training constraints per instance, drawn at random based on its label.
All parameters are tuned using the accuracy on the validation set. For \hdsl, we use the fast heuristic described in Section~\ref{sec:approx} and tune the scale parameter $\lambda \in \{1,10,\dots,10^9\}$. The regularization parameters of \mldiag and \svm are tuned in $\{10^{-9},\dots,10^8\}$ and the ``aggressiveness'' parameter of OASIS is tuned in $\{10^{-9},\dots,10^2\}$.

\subsubsection{Results} 

\begin{table}[t]
\centering
\small
\begin{tabular}{|c||c|c|c|c|c|c|}
\hline
Dataset     & \ident  & \rpml & \pcaml &\mldiag-$\ell_2$ & \mldiag-$\ell_1$ & \hdsl \\ \hline\hline
dexter       & 20.1 & 24.0  {[}1000{]} & 9.3 [50] &8.4       & 8.4  {[}773{]}              & \textbf{6.5}  {[}183{]}         \\ \hline
dorothea     & 9.3    & 11.4 {[}150{]} & 9.9 [800]  & 6.8       & 6.6  {[}860{]}              & \textbf{6.5} {[}731{]}          \\ \hline
rcv1\_2 & 6.9  & 7.0  {[}2000{]}  & 4.5 [1500] &3.5       & 3.7  {[}5289{]}             & \textbf{3.4}  {[}2126{]}        \\ \hline
rcv1\_4      & 11.2 & 10.6  {[}1000{]} & 6.1 	[800] &6.2       & 7.2  {[}3878{]}             & \textbf{5.7}  {[}1888{]}        \\ \hline
\end{tabular}
\caption{$k$-NN test error (\%) of the similarities learned with each method. The number of features used by each similarity (when smaller than $d$) is given in brackets. Best accuracy on each dataset is shown in bold.}
\label{tKnnErr}
\end{table}

\begin{table}[t]
\centering
\small
\begin{tabular}{|c||c|c|c|c|c|}
\hline
Dataset     & \svm-poly-2 & \svm-poly-3 & \svm-linear-$\ell_2$ & \svm-linear-$\ell_1$ & \hdsl \\ \hline\hline
dexter       & 9.4                   & 9.2                   & 8.9        & 8.9  {[}281{]}                     & \textbf{6.5}  {[}183{]}         \\ \hline
dorothea     & 7                     & 6.6                   & 8.1        & 6.6  {[}366{]}                    & \textbf{6.5} {[}731{]}          \\ \hline
	rcv1\_2 & 3.4                   & \textbf{3.3}                   & 3.5        & 4.0  {[}1915{]}                    & 3.4  {[}2126{]}        \\ \hline
rcv1\_4      & 5.7                   & 5.7                   & \textbf{5.1}        & 5.7  {[}2770{]}                    & 5.7 {[}1888{]}        \\ \hline
\end{tabular}
\caption{Test error (\%) of several SVM variants compared to \hdsl. As in Table~\ref{tKnnErr}, the number of features is given in brackets and best accuracies are shown in bold. 
}
\label{tCompareSVMs}
\end{table}

\paragraph{Classification Performance} We first investigate the performance of each similarity learning approach in $k$-NN classification ($k$ was set to 3 for all experiments). For \rpml and \pcaml, we choose the dimension $r$ of the reduced space based on the accuracy of the learned similarity on the validation set, limiting our search to $r\leq 2000$ because OASIS is extremely slow beyond this point.\footnote{Note that the number of PCA dimensions is at most the number of training examples. Therefore, for dexter and dorothea, $r$ is at most 300 and 800 respectively.} Similarly, we use the performance on validation data to do early stopping in \hdsl, which also has the effect of restricting the number of features used by the learned similarity.

Table~\ref{tKnnErr} shows the $k$-NN classification performance. We can first observe that \rpml often performs worse than \ident, which is consistent with previous observations showing that a large number of random projections may be needed to obtain good performance \citep{Fradkin2003}. \pcaml gives much better results, but is generally outperformed by a simple diagonal similarity learned directly in the original high-dimensional space. \hdsl, however, outperforms all other algorithms on these datasets, including \mldiag. This shows the good generalization performance of the proposed approach, even though the number of training samples is sometimes very small compared to the number of features, as in dexter and dorothea. It also shows the relevance of encoding ``second order'' information (pairwise interactions between the original features) in the similarity instead of considering a simple weighting of features as in \mldiag.


Table~\ref{tCompareSVMs} shows the comparison with SVMs. Interestingly, \hdsl outperforms all SVM variants on dexter and dorothea, both of which have a large proportion of irrelevant features. This shows that its greedy strategy and early stopping mechanism achieves better feature selection and generalization than the $\ell_1$ version of linear SVM. On the other two datasets, \hdsl is competitive with SVM, although it is outperformed slightly by one variant (\svm-poly-3 on rcv1\_2 and \svm-linear-$\ell_2$ on rcv1\_4), both of which rely on all features.

\paragraph{Feature selection and sparsity} 

\begin{figure}[t]
\centering
\subfigure[dexter dataset]{\includegraphics[width=0.45\textwidth]{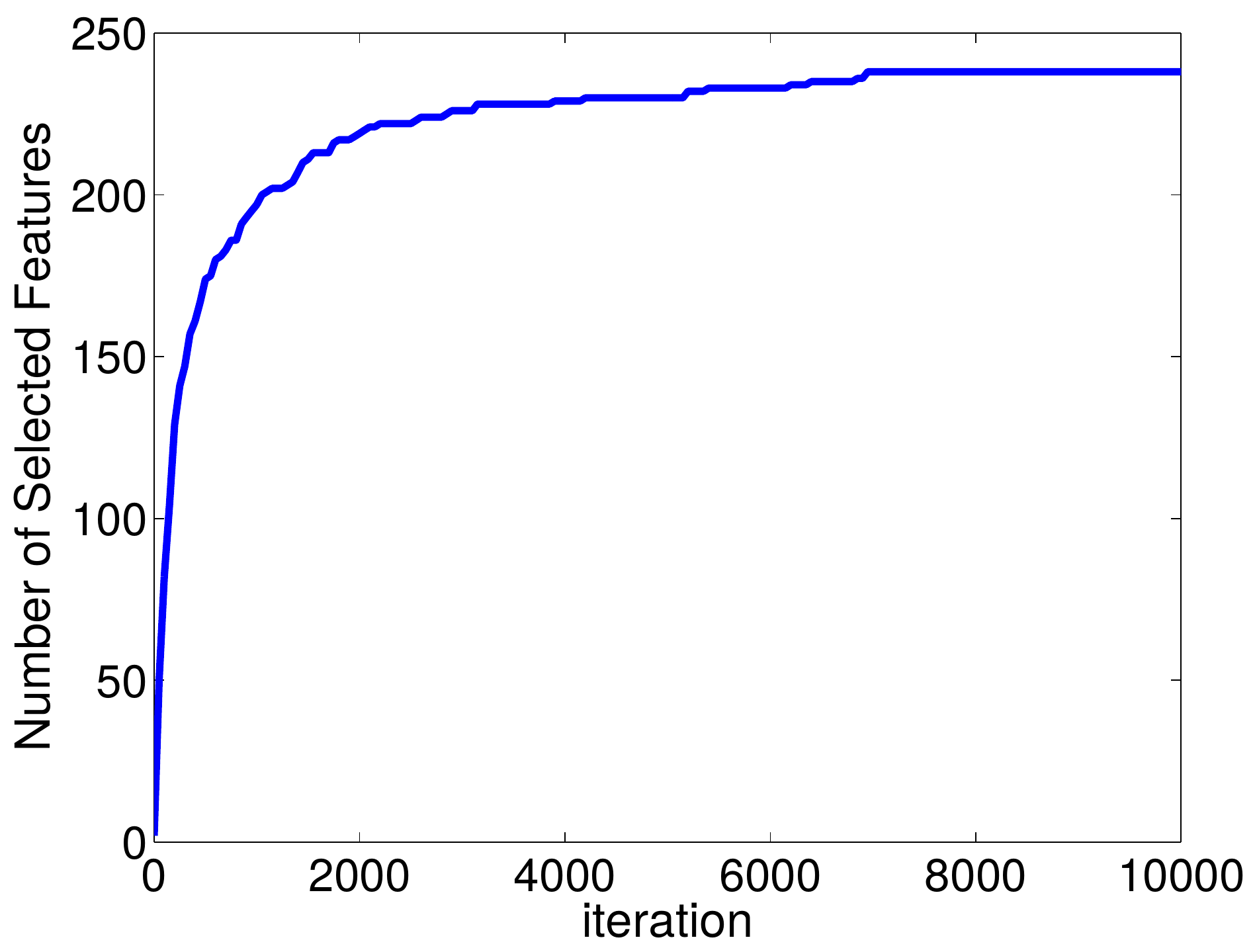}
\label{fNumFea1}}
\subfigure[rcv1\_4 dataset]{\includegraphics[width=0.45\textwidth]{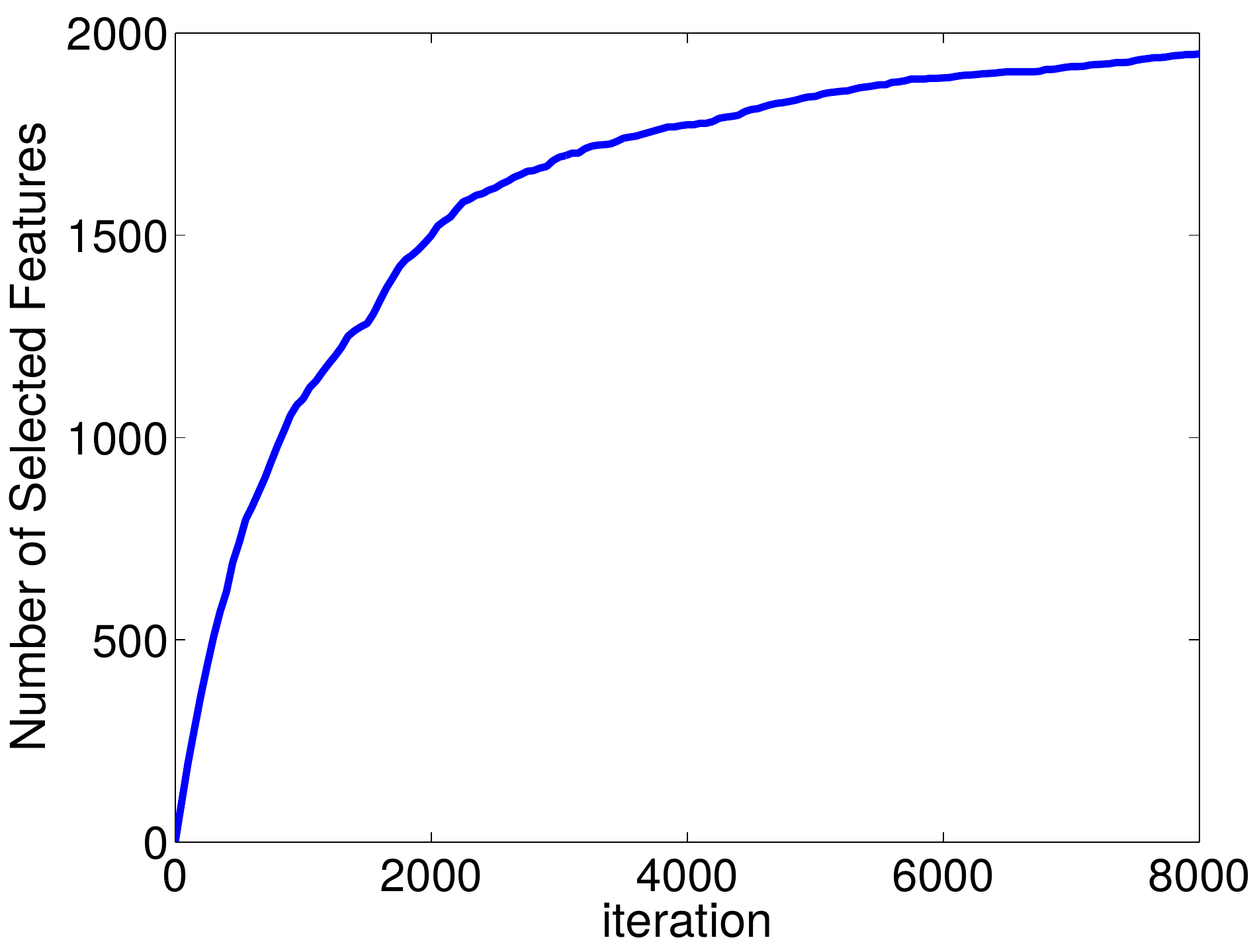} 
\label{fNumFea2}}
 \caption{Number of active features learned by \hdsl as a function of the iteration number.}\label{fSparseFeature}
 \label{fFeaNum}
\end{figure}

\begin{figure}[t]
\centering
\subfigure[dexter ($20,000\times20,000$ matrix, 712 nonzeros)]{\includegraphics[width=0.43\textwidth]{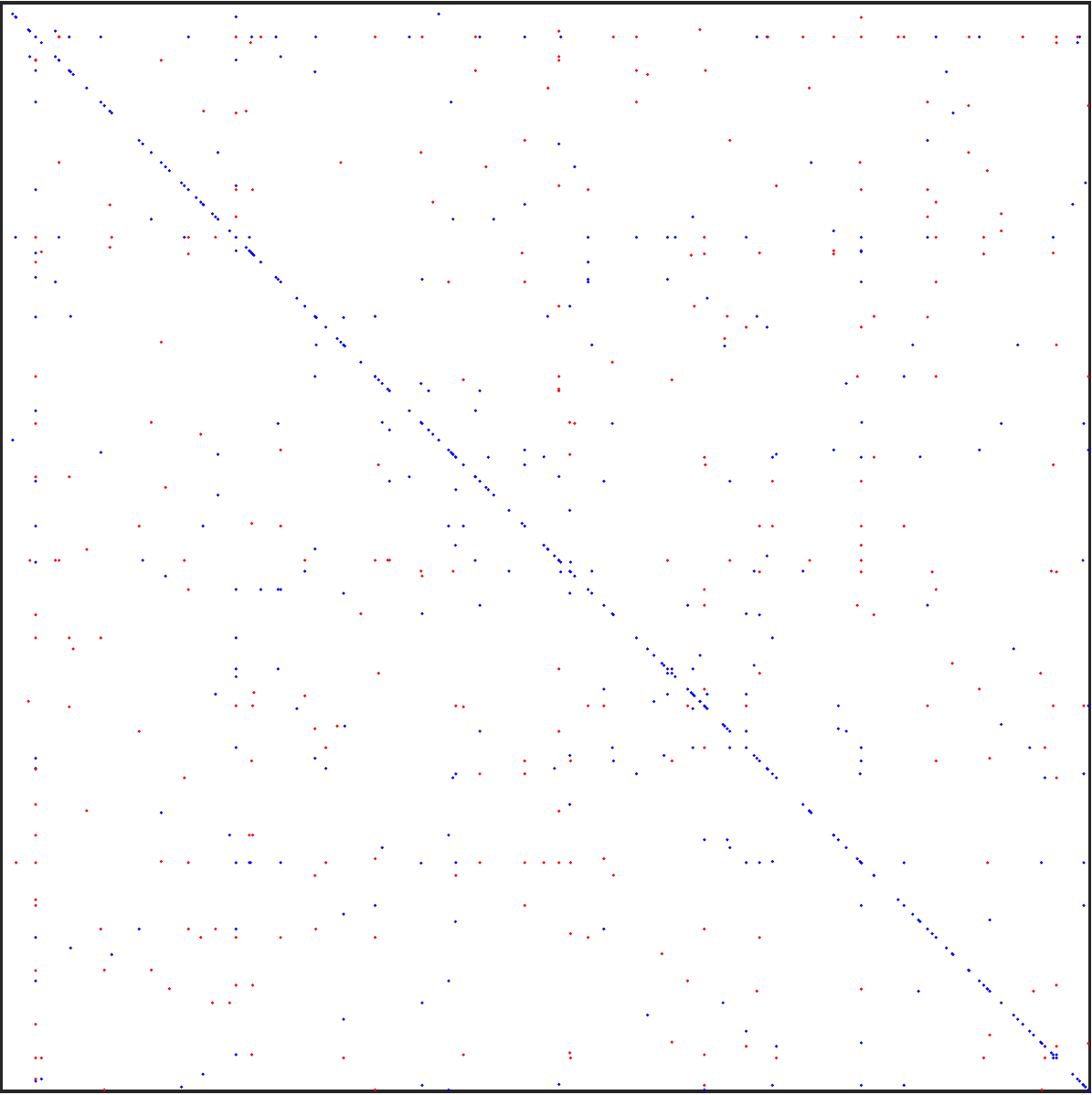}
\label{dexter_mat}}
\hspace*{1cm}\subfigure[rcv1\_4 ($29,992\times29,992$ matrix, 5263 nonzeros)]{\includegraphics[width=0.43\textwidth]{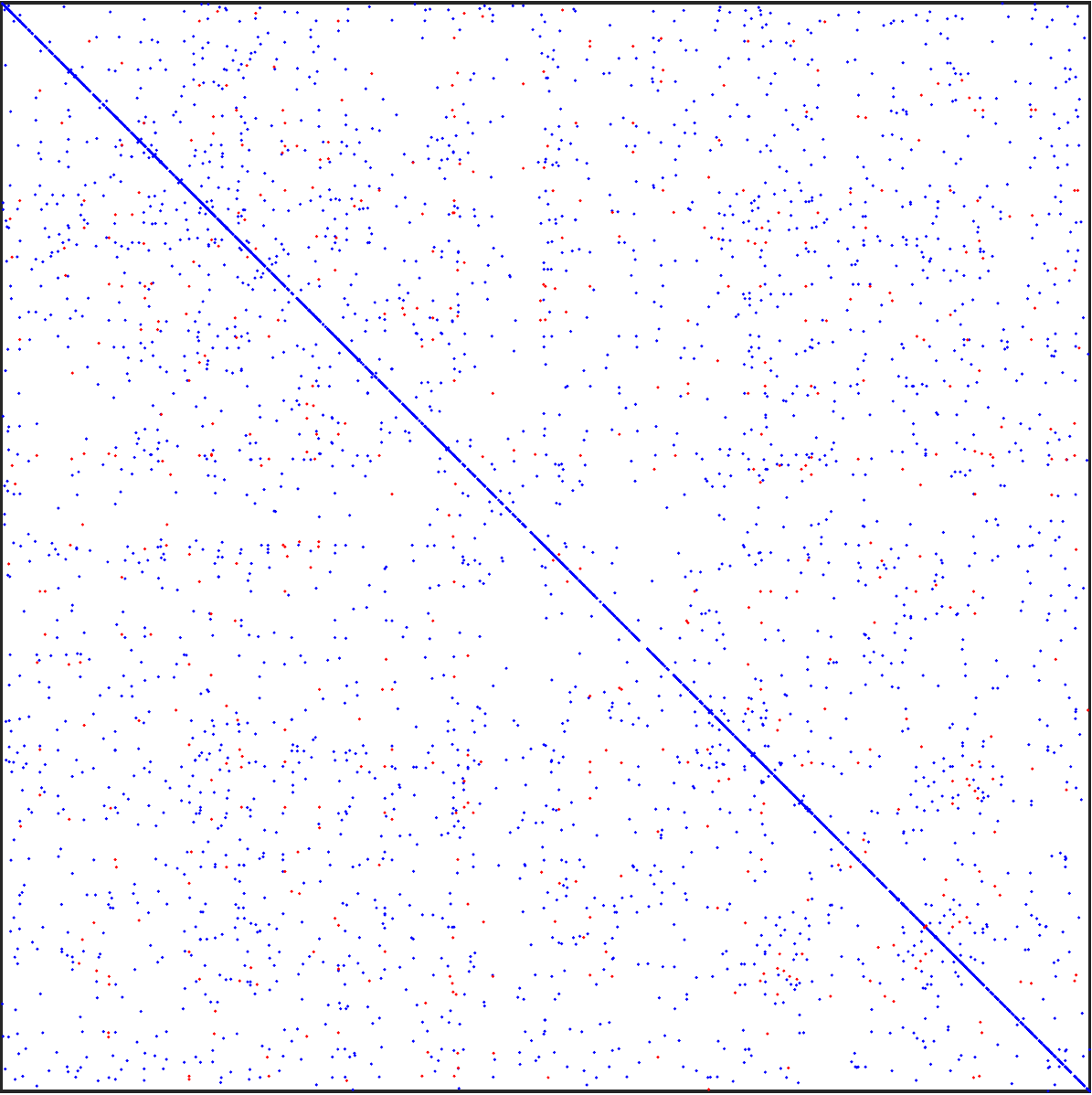} 
\label{rcv14_mat}}
 \caption{Sparsity structure of the matrix $\mat{M}$ learned by \hdsl. Positive and negative entries are shown in blue and red, respectively (best seen in color).}\label{}
 \label{hdsl_mat}
\end{figure}

We now focus on the ability of \hdsl to perform feature selection and more generally to learn sparse similarity functions. To better understand the behavior of \hdsl, we show in Figure~\ref{fFeaNum} the number of selected features as a function of the iteration number for two of the datasets. Remember that at most two new features can be added at each iteration. Figure~\ref{fFeaNum} shows that \hdsl incorporates many features early on but tends to eventually converge to a modest fraction of features (the same observation holds for the other two datasets). This may explain why \hdsl does not suffer much from overfitting even when training data is scarce as in dexter.

Another attractive characteristic of \hdsl is its ability to learn a matrix that is sparse not only on the diagonal but also off-diagonal (the proportion of nonzero entries is in the order of 0.0001\% for all datasets). In other words, the learned similarity only relies on a few relevant pairwise interactions between features. Figure~\ref{hdsl_mat} shows two examples, where we can see that \hdsl is able to exploit the product of two features as either a positive or negative contribution to the similarity score. This opens the door to an analysis of the importance of pairs of features (for instance, word co-occurrence) for the application at hand. Finally, the extreme sparsity of the matrices allows very fast similarity computation. Together with the superior accuracy brought by \hdsl, it makes our approach potentially useful in a variety of contexts ($k$-NN, clustering, ranking, etc).

Finally, it is also worth noticing that \hdsl uses significantly less features than \mldiag-$\ell_1$ (see numbers in brackets in Table~\ref{tKnnErr}). We attribute this to the extra modeling capability brought by the non-diagonal similarity observed in Figure~\ref{hdsl_mat}.\footnote{Note that \hdsl uses roughly the same number of features as \svm-linear-$\ell_1$ (Table~\ref{tCompareSVMs}), but it is difficult to draw any solid conclusion because the objective and training data for each method are different, and SVM is a combination of binary models.}





\paragraph{Dimension reduction}

\begin{figure}[t]
\centering
\subfigure[dexter dataset]{
\includegraphics[width=0.48\textwidth]{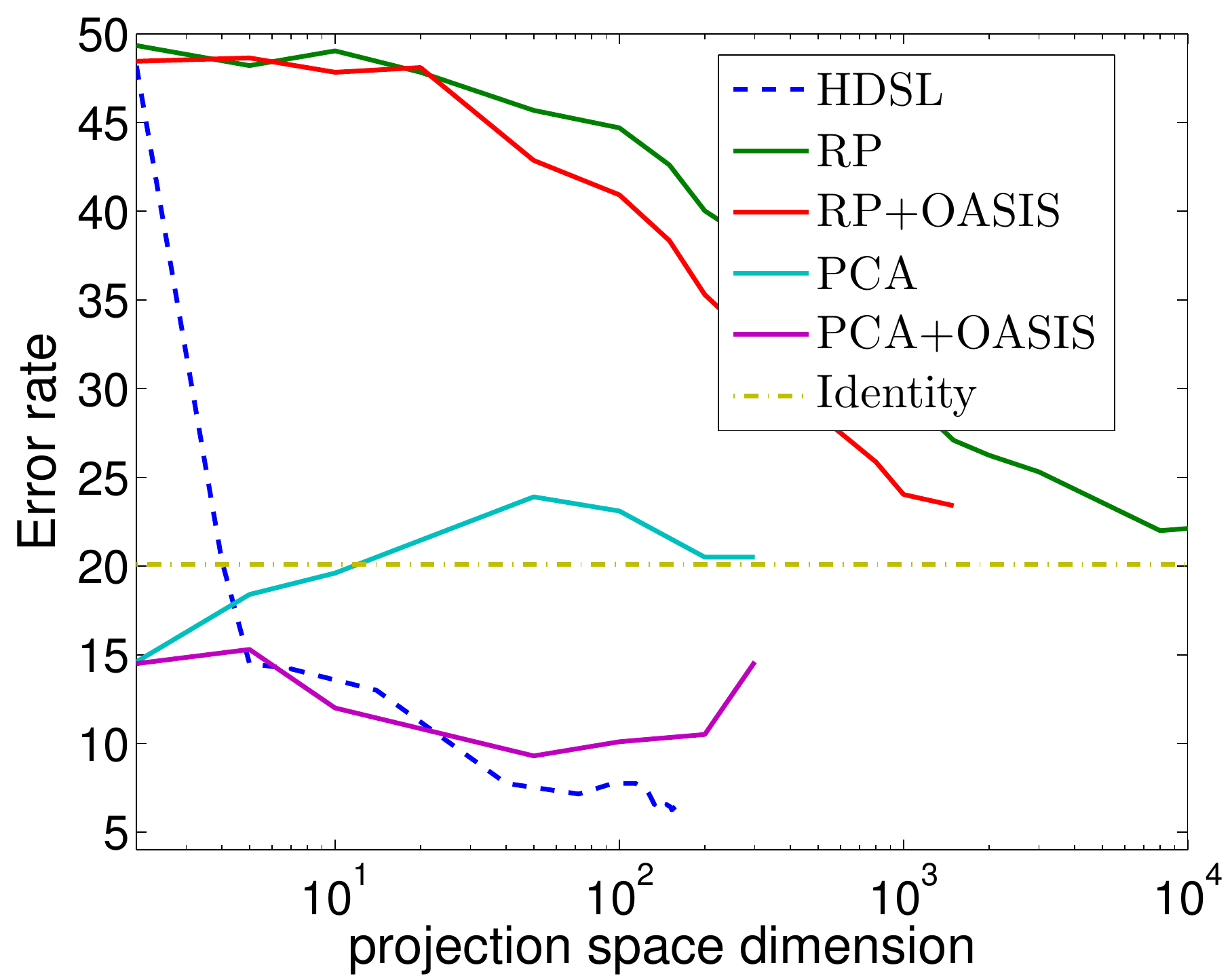}
\label{fErr_dim1}
}
\subfigure[dorothea dataset]{\includegraphics[width=0.48\textwidth]{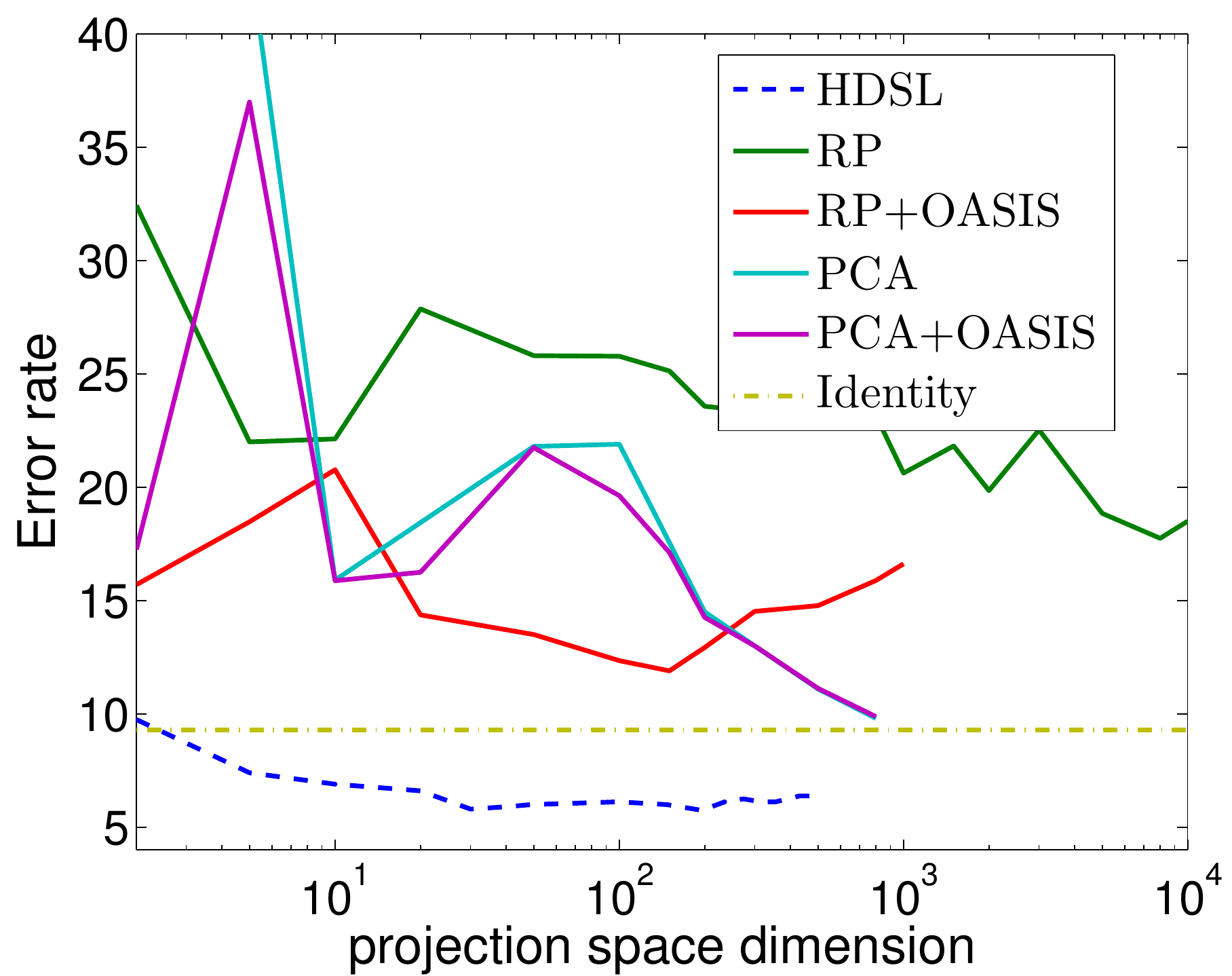} 
\label{fErr_dim4}}
\subfigure[rcv1\_2 dataset]{\includegraphics[width=0.48\textwidth]{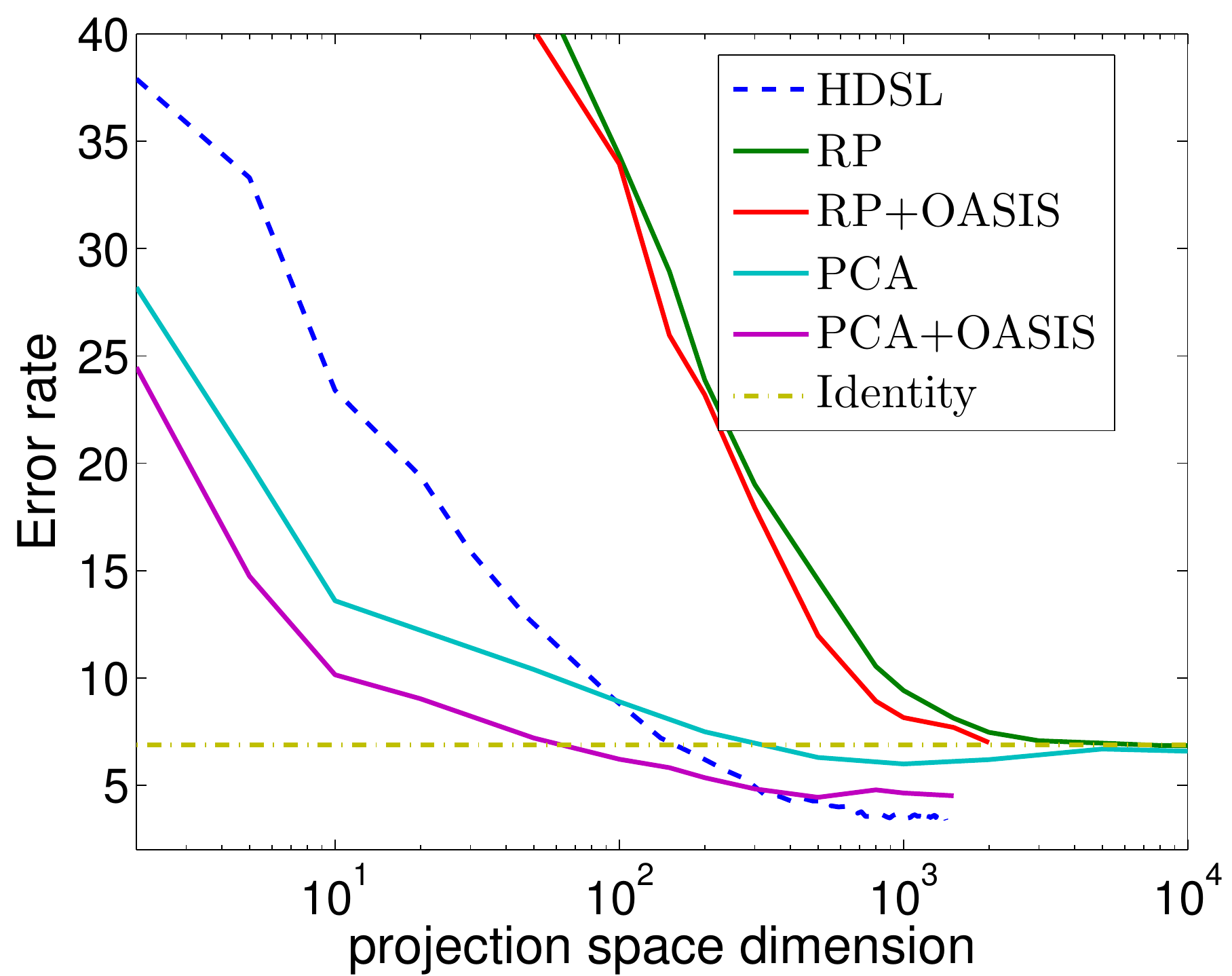} 
\label{fErr_dim2}}
\subfigure[rcv1\_4 dataset]{
\includegraphics[width=0.48\textwidth]{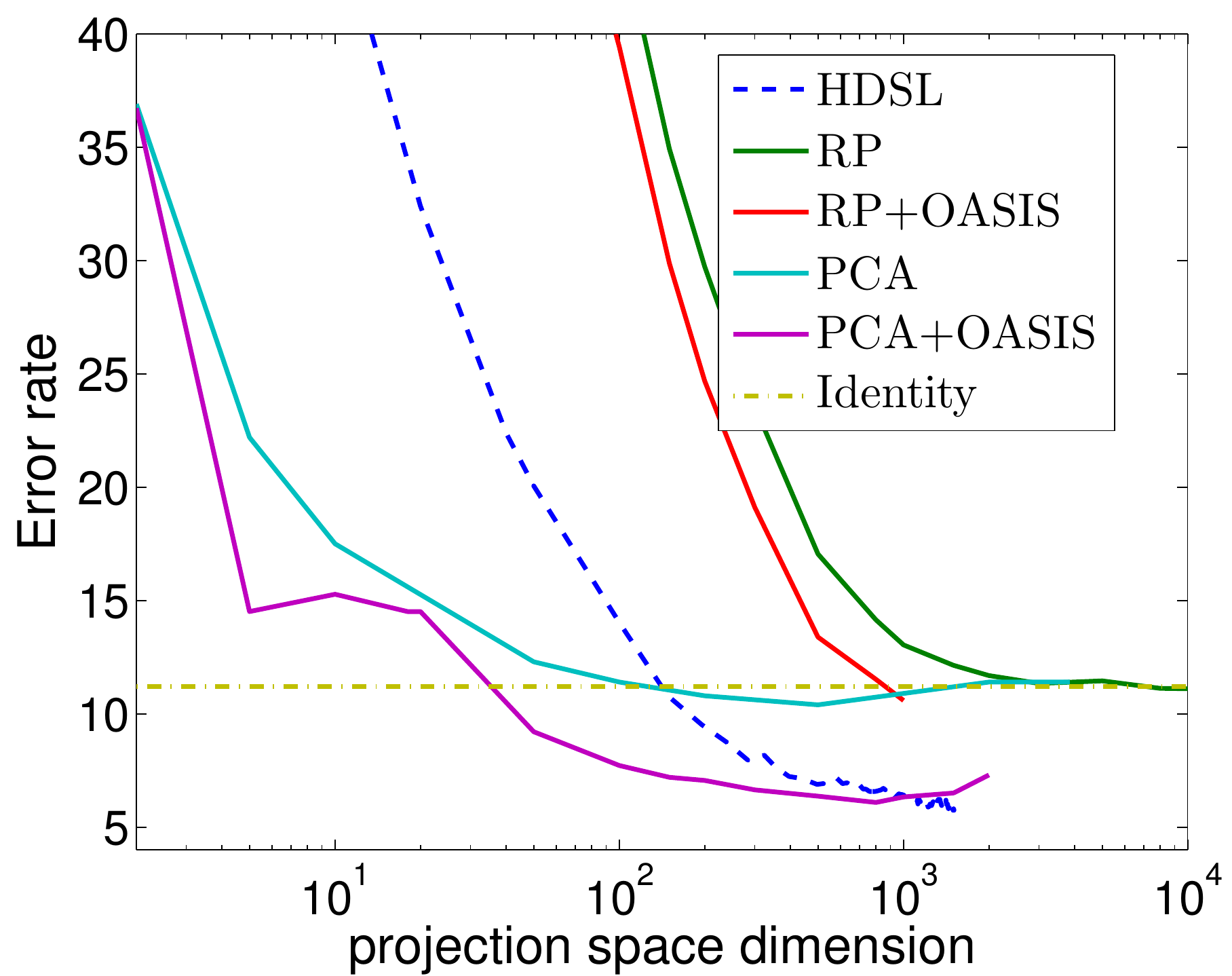}
\label{fErr_dim3}
}
 \caption{$k$-NN test error as a function of the dimensionality of the space (in log scale). Best seen in color.}\label{fErr_dim}
\end{figure}

We now investigate the potential of \hdsl for dimensionality reduction. Recall that \hdsl learns a sequence of PSD matrices $\mat{M}^{(k)}$. We can use the square root of $\mat{M}^{(k)}$ to project the data into a new space where the dot product is equivalent to $S_{\mat{M}^{(k)}}$ in the original space. The dimension of the projection space is equal to the rank of $\mat{M}^{(k)}$, which is upper bounded by $k+1$ (see Section~\ref{sec:form}). A single run of \hdsl can thus be seen as incrementally building projection spaces of increasing dimensionality.

To assess the dimensionality reduction quality of \hdsl (measured by $k$-NN classification error on the test set), we plot its performance at various iterations during the runs that generated the results of Table~\ref{tKnnErr}. We compare it to two standard dimensionality reduction techniques: random projection and PCA. We also evaluate \rpml and \pcaml, i.e., learn a similarity with OASIS on top of the RP and PCA features.\footnote{Again, we were not able to run OASIS beyond a certain dimension due to computational complexity.} Note that OASIS was tuned separately for each projection size, making the comparison a bit unfair to \hdsl. The results are shown in Figure~\ref{fErr_dim}.
As observed earlier, random projection-based approaches achieve poor performance. When the features are not too noisy (as in rcv1\_2 and rcv1\_4), PCA-based methods are better than \hdsl at compressing the space into very few dimensions, but \hdsl eventually catches up. On the other hand, PCA suffers heavily from the presence of noise (dexter and dorothea), while \hdsl is able to quickly improve upon the standard similarity in the original space. Finally, on all datasets, we observe that \hdsl converges to a stationary dimension without overfitting, unlike \pcaml which exhibits signs of overfitting on dexter and rcv1\_4 especially.

\section{Concluding Remarks}
\label{sConclude}

In this work, we proposed an efficient approach to learn similarity functions from high-dimensional sparse data. This is achieved by forming the similarity as a combination of simple sparse basis elements that operate on only two features and the use of an (approximate) Frank-Wolfe algorithm. Our algorithm is completed by a novel generalization analysis which validates the design choices and highlights the robustness of our approach to high dimensions. Experiments on synthetic and real datasets confirmed the good practical behavior of our method for classification and dimensionality reduction.
The learned similarity may be applied to other algorithms that rely on a similarity function (clustering, ranking), or as a way to preprocess the data before applying another learning algorithm. We also note that \citet{Amand2017a} have recently extended our \hdsl algorithm to learn local metrics for different regions of the space in addition to the global metric.

We leave several fundamental questions for future work.
In particular, our framework could be extended to optimize a loss function related to a linear classification objective. We could then attempt to adapt our analysis to obtain generalization bounds directly for the classification error. Such bounds exist in the literature \citep[see][]{Bellet2012a,Guo2014a} but exhibit a classic dependence on the data dimension that could be avoided with our approach.
Another interesting, though challenging direction is to formally study the conditions under which a sparse ground truth similarity can be accurately recovered from similarity judgments. Inspiration could be drawn from the related problem of sparse recovery in the compressed sensing literature \citep{FoucartR13}.


\paragraph{Acknowledgments}
This work was partially supported by a grant from CPER Nord-Pas de Calais/FEDER DATA Advanced data science and technologies 2015-2020.
It was also  supported in  part by  the  Intelligence
Advanced Research Projects Activity (IARPA) via Department of Defense U.S. Army Research Laboratory
(DoD / ARL) contract number W911NF-12-C-0012, a
NSF  IIS-1065243,  an  Alfred.   P.  Sloan  Research  Fellowship,  DARPA  award  D11AP00278,  and  an  ARO
YIP Award (W911NF-12-1-0241).  The U.S. Government is authorized to reproduce and distribute reprints
for Governmental purposes notwithstanding any copyright annotation thereon.  The views and conclusions
contained herein are those of the authors and should
not be interpreted as necessarily representing the official policies or endorsements, either expressed or implied, of IARPA, DoD/ARL, or the U.S. Government.

\bibliographystyle{plainnat}
\bibliography{main}

\appendix
\renewcommand\thesection{Appendix~\Alph{section}}

\section{Technical Lemmas}
\label{app:tech}

The following classic result, known as the first Hoeffding's decomposition, allows to represent a $U$-statistic as a sum of i.i.d. blocks.

\begin{lemma}[\citealp{Hoeffding1948a}]
\label{lem:Ustats}
Let $q$ : $\mathcal{Z}\times\mathcal{Z}\times\mathcal{Z}\rightarrow \mathbb{R}$ be a real-valued function. Given the i.i.d. random variables $\vct{z}_1,\vct{z}_2,...,\vct{z}_n\in \mathcal{Z}$, we have
\begin{align*}
U_n(q) &= \frac{1}{n(n-1)(n-2)} \sum_{i\neq j\neq k} q(\vct{z}_i,\vct{z}_j,\vct{z}_k) \\
&= \frac{1}{n!}\sum_\pi \frac{1}{\lfloor n/3 \rfloor} \sum_{i=1}^{\lfloor n/3 \rfloor} q(\vct{z}_{\pi(i)},\vct{z}_{\pi(i+{\lfloor n/3 \rfloor)}},\vct{z}_{\pi(i+2\times {\lfloor n/3 \rfloor)}}).
\end{align*}
\end{lemma}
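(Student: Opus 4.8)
The plan is to establish the identity by a symmetry/counting argument: I will show that the right-hand side, when fully expanded, is the \emph{same} uniform average of the terms $q(\vct{z}_a,\vct{z}_b,\vct{z}_c)$ over all ordered triples $(a,b,c)$ of pairwise distinct indices that defines the $U$-statistic $U_n(q)$ on the left.

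Write $m=\lfloor n/3\rfloor$. First I would note that for any permutation $\pi$ and any $i\in\{1,\dots,m\}$ the three positions $i$, $i+m$, $i+2m$ are distinct (they lie in $\{1,\dots,3m\}\subseteq\{1,\dots,n\}$ and are spaced by $m\geq 1$), so $\pi(i),\pi(i+m),\pi(i+2m)$ are distinct indices and the block $q(\vct{z}_{\pi(i)},\vct{z}_{\pi(i+m)},\vct{z}_{\pi(i+2m)})$ is a legitimate summand of $U_n(q)$. Consequently the right-hand side equals
\[
\frac{1}{n!\,m}\sum_{\pi}\sum_{i=1}^{m} q\bigl(\vct{z}_{\pi(i)},\vct{z}_{\pi(i+m)},\vct{z}_{\pi(i+2m)}\bigr)
=\frac{1}{n!\,m}\sum_{(a,b,c)} N(a,b,c)\, q(\vct{z}_a,\vct{z}_b,\vct{z}_c),
\]
where the outer sum runs over ordered triples of pairwise distinct indices and $N(a,b,c)$ denotes the number of pairs $(\pi,i)$ with $1\le i\le m$ such that $\pi(i)=a$, $\pi(i+m)=b$, $\pi(i+2m)=c$.

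The crux is to compute $N(a,b,c)$ and observe that it does not depend on the chosen triple. Fixing $i$, prescribing the three values of $\pi$ at the three distinct positions $i,i+m,i+2m$ leaves $n-3$ positions to be filled by a bijection, giving $(n-3)!$ permutations; summing over the $m$ admissible values of $i$ yields $N(a,b,c)=m\,(n-3)!$ for every distinct $(a,b,c)$. Substituting this back, the factors of $m$ cancel and
\[
\text{RHS}=\frac{(n-3)!}{n!}\sum_{(a,b,c)} q(\vct{z}_a,\vct{z}_b,\vct{z}_c)
=\frac{1}{n(n-1)(n-2)}\sum_{i\neq j\neq k} q(\vct{z}_i,\vct{z}_j,\vct{z}_k)=U_n(q),
\]
which is the claim.

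There is no genuine obstacle here; it is elementary combinatorics. The only points deserving care are bookkeeping: checking that $3\lfloor n/3\rfloor\le n$ so that every block within a fixed permutation consists of distinct indices (hence is a valid term of the $U$-statistic), and verifying that the multiplicity $N(a,b,c)$ is truly independent of $(a,b,c)$ — both immediate once the invariance of the construction under relabeling of indices is made explicit. One may equivalently phrase the whole argument probabilistically: if $\pi$ is a uniformly random permutation of $\{1,\dots,n\}$ and $i$ is independent and uniform on $\{1,\dots,m\}$, then $(\pi(i),\pi(i+m),\pi(i+2m))$ is a uniformly random ordered triple of distinct indices, so its expected $q$-value is exactly $U_n(q)$, which is precisely the content of the lemma.
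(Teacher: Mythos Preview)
Your proof is correct and follows essentially the same counting argument as the paper: both show that each ordered triple $(a,b,c)$ of distinct indices appears on the right-hand side exactly $\lfloor n/3\rfloor\cdot(n-3)!$ times, yielding the same uniform weight $\frac{1}{n(n-1)(n-2)}$ as on the left. Your write-up is more detailed (explicitly checking that the block positions are distinct and offering the probabilistic rephrasing), but the underlying idea is identical.
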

\begin{proof}
Observe that $\forall i\neq j\neq k$, $q(\vct{z}_i,\vct{z}_j,\vct{z}_k)$ appears once on the left hand side and $U_n(q)$ has $\frac{1}{n(n-1)(n-2)}$ of its value, while on the right hand side it appears $(n-3)! \times \lfloor n/3 \rfloor $ times, because for each of the $\lfloor n/3 \rfloor$ positions there are $(n-3)!$ possible permutations. Thus the right hand side also has $\frac{1}{n(n-1)(n-2)}$ of its function value. We thus have the equality.
\end{proof}

The next technical lemma is based on the above representation.

\begin{lemma}
\label{lem:UInequality}
Let $Q$ be a set of functions from $\mathcal{Z}^3$ to $\mathbb{R}$. If $\vct{z}_1,\vct{z}_2,...,\vct{z}_n\in \mathcal{Z}$ are i.i.d., then we have
\if\arxiv1
$$\mathbb{E}[\sup_{q\in Q}\frac{1}{n(n-1)(n-2)} \sum_{i\neq j\neq k} q(\vct{z}_i,\vct{z}_j,\vct{z}_k)] \leq \mathbb{E}[\sup_{q\in Q} \frac{1}{\lfloor n/3 \rfloor} \sum_{i=1}^{\lfloor n/3 \rfloor} q(\vct{z}_{i},\vct{z}_{i+{\lfloor n/3 \rfloor}},\vct{z}_{i+2\times {\lfloor n/3 \rfloor}})].$$
\else
\begin{multline*}
\mathbb{E}[\sup_{q\in Q}\frac{1}{n(n-1)(n-2)} \sum_{i\neq j\neq k} q(\vct{z}_i,\vct{z}_j,\vct{z}_k)]\\
\leq \mathbb{E}[\sup_{q\in Q} \frac{1}{\lfloor n/3 \rfloor} \sum_{i=1}^{\lfloor n/3 \rfloor} q(\vct{z}_{i},\vct{z}_{i+{\lfloor n/3 \rfloor}},\vct{z}_{i+2\times {\lfloor n/3 \rfloor}})].
\end{multline*}
\fi
\end{lemma}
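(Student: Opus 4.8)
The plan is to derive the inequality directly from Hoeffding's first decomposition (Lemma~\ref{lem:Ustats}), combined with the elementary fact that the supremum of a (finite) average is dominated by the corresponding average of the suprema, and finally to exploit the i.i.d.\ assumption to collapse the resulting average over permutations.

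First I would fix $q\in Q$ and apply Lemma~\ref{lem:Ustats} to $q$, writing the $U$-statistic $\frac{1}{n(n-1)(n-2)}\sum_{i\neq j\neq k} q(\vct{z}_i,\vct{z}_j,\vct{z}_k)$ as the uniform average over the $n!$ permutations $\pi$ of the i.i.d.-block sums $\frac{1}{\lfloor n/3\rfloor}\sum_{i=1}^{\lfloor n/3\rfloor} q(\vct{z}_{\pi(i)},\vct{z}_{\pi(i+\lfloor n/3\rfloor)},\vct{z}_{\pi(i+2\lfloor n/3\rfloor)})$. Since this identity holds for every $q\in Q$, taking $\sup_{q\in Q}$ on both sides and using that the supremum of a convex combination is at most that convex combination of the suprema yields, pointwise in the sample,
\begin{equation*}
\sup_{q\in Q}\frac{1}{n(n-1)(n-2)}\sum_{i\neq j\neq k} q(\vct{z}_i,\vct{z}_j,\vct{z}_k) \;\leq\; \frac{1}{n!}\sum_\pi \sup_{q\in Q}\frac{1}{\lfloor n/3\rfloor}\sum_{i=1}^{\lfloor n/3\rfloor} q(\vct{z}_{\pi(i)},\vct{z}_{\pi(i+\lfloor n/3\rfloor)},\vct{z}_{\pi(i+2\lfloor n/3\rfloor)}).
\end{equation*}

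Next I would take expectations over $\vct{z}_1,\dots,\vct{z}_n$. By linearity over the finite sum over $\pi$, the right-hand side becomes $\frac{1}{n!}\sum_\pi \mathbb{E}\big[\sup_{q\in Q}\frac{1}{\lfloor n/3\rfloor}\sum_{i}q(\vct{z}_{\pi(i)},\dots)\big]$. Because the $\vct{z}_i$ are i.i.d., the tuple $(\vct{z}_{\pi(1)},\dots,\vct{z}_{\pi(n)})$ has the same joint law as $(\vct{z}_1,\dots,\vct{z}_n)$ for every permutation $\pi$, so each of the $n!$ expectations equals $\mathbb{E}\big[\sup_{q\in Q}\frac{1}{\lfloor n/3\rfloor}\sum_{i=1}^{\lfloor n/3\rfloor} q(\vct{z}_i,\vct{z}_{i+\lfloor n/3\rfloor},\vct{z}_{i+2\lfloor n/3\rfloor})\big]$. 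The average over $\pi$ then reduces to this single term, which is precisely the claimed upper bound.

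There is no substantial obstacle here: this is the standard device of decoupling a $U$-statistic into i.i.d.\ blocks. The only points deserving a line of care are the exchange of the supremum with the permutation average in the second step (immediate, and requiring no measurability hypothesis since the average is finite) and the use of exchangeability under the i.i.d.\ assumption to identify the $n!$ expectations in the third step; measurability of the relevant suprema is assumed implicitly throughout (as is customary), so that the outer expectations are well defined.
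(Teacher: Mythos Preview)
Your proposal is correct and follows essentially the same approach as the paper: apply Hoeffding's decomposition (Lemma~\ref{lem:Ustats}) to write the $U$-statistic as an average over permutations of i.i.d.-block sums, push the supremum inside the convex combination, take expectations, and use exchangeability of the i.i.d.\ sample to collapse the permutation average. The paper's proof is line-for-line the same argument, without the explicit remark on measurability.
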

\begin{proof}
From Lemma \ref{lem:Ustats}, we observe that
\begin{equation*}
\begin{aligned}
&\mathbb{E}[\sup_{q\in Q}\frac{1}{n(n-1)(n-2)} \sum_{\vct{z}\neq \vct{z}' \neq \vct{z}''} q(\vct{z},\vct{z}',\vct{z}'')]\\
&= \mathbb{E}[\sup_{q\in Q}\frac{1}{n!}\sum_\pi\frac{1}{\lfloor n/3 \rfloor} \sum_{i=1}^{\lfloor n/3 \rfloor} q(\vct{z}_{\pi(i)},\vct{z}_{\pi(i+{\lfloor n/3 \rfloor)}},\vct{z}_{\pi(i+2\times {\lfloor n/3 \rfloor)}})]\nonumber\\
&\leq \frac{1}{n!}\mathbb{E}[\sum_\pi\sup_{q\in Q}\frac{1}{\lfloor n/3 \rfloor} \sum_{i=1}^{\lfloor n/3 \rfloor} q(\vct{z}_{\pi(i)},\vct{z}_{\pi(i+{\lfloor n/3 \rfloor)}},\vct{z}_{\pi(i+2\times {\lfloor n/3 \rfloor)}})]\\
&=\frac{1}{n!}\sum_\pi\mathbb{E}[\sup_{q\in Q}\frac{1}{\lfloor n/3 \rfloor} \sum_{i=1}^{\lfloor n/3 \rfloor} q(\vct{z}_{\pi(i)},\vct{z}_{\pi(i+{\lfloor n/3 \rfloor)}},\vct{z}_{\pi(i+2\times {\lfloor n/3 \rfloor)}})]\\
&=\mathbb{E}[\sup_{q\in Q}\frac{1}{\lfloor n/3 \rfloor} \sum_{i=1}^{\lfloor n/3 \rfloor} q(\vct{z}_{i},\vct{z}_{i+{\lfloor n/3 \rfloor}},\vct{z}_{i+2\times {\lfloor n/3 \rfloor}})],
\end{aligned}
\end{equation*}
which proves the result.
\end{proof}

Finally, we recall McDiarmid's inequality.

\begin{lemma}[\citealp{mcdiarmid1989method}]
\label{lem:mcdiarmid}
Let $\mathcal{Z}$ be some set and let $f:\mathcal{Z}^n\rightarrow\mathbb{R}$ be a function of $n$ variables such that for some $c>0$, for all $i\in\{1,\dots,n\}$ and for all $z_1,\dots,z_n,z_i'\in \mathcal{Z}$, we have
$$|f(z_1,\dots, z_{i-1}, z_i, z_{i+1}, \dots, ,z_n) - f(z_1,\dots,z_{i-1},z'_i,z_{i+1},\dots,z_n)|\leq c.$$
Let $Z_1,\dots,Z_n$ be $n$ independent random variables taking values in $\mathcal{Z}$. Then, with probability at least $1-\delta$, we have
$$|f(Z_1,\dots,Z_n) - \mathbb{E}[f(Z_1,\dots,Z_n)]|\leq c\sqrt{\frac{n\log(2/\delta)}{2}}.$$
\end{lemma}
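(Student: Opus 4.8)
The statement is the classical bounded-differences (McDiarmid) inequality, so the plan is to follow the martingale route via a conditional Chernoff bound. First I would introduce the Doob martingale associated with $f$: writing $V = f(Z_1,\dots,Z_n)$ and letting $\mathcal{F}_k = \sigma(Z_1,\dots,Z_k)$, set $V_k = \mathbb{E}[V \mid \mathcal{F}_k]$ for $k=0,\dots,n$, so that $V_0 = \mathbb{E}[V]$ and $V_n = V$. The telescoping identity $V - \mathbb{E}[V] = \sum_{k=1}^n D_k$ with $D_k = V_k - V_{k-1}$ reduces the claim to a tail bound for a sum of martingale differences, each satisfying $\mathbb{E}[D_k \mid \mathcal{F}_{k-1}] = 0$.

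The key structural step, and the one I expect to require the most care, is to show that conditionally on $\mathcal{F}_{k-1}$ the difference $D_k$ is confined to an interval of width at most $c$. Using independence of the $Z_i$, I would write $V_k = g_k(Z_1,\dots,Z_k)$, where $g_k(z_1,\dots,z_k) = \mathbb{E}[f(z_1,\dots,z_k,Z_{k+1},\dots,Z_n)]$ integrates out the remaining coordinates, so that $D_k = g_k(Z_1,\dots,Z_{k-1},Z_k) - \mathbb{E}_{Z_k}[g_k(Z_1,\dots,Z_{k-1},Z_k)]$. Comparing $g_k$ at two values $z_k,z_k'$ of the $k$-th argument and pushing the difference inside the expectation over $(Z_{k+1},\dots,Z_n)$, the bounded-differences hypothesis gives $|g_k(\dots,z_k) - g_k(\dots,z_k')| \le c$ pointwise. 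Hence, viewing $D_k$ as a function $W_k$ of the $k$-th input, the $\mathcal{F}_{k-1}$-measurable quantities $L_k = \inf_z W_k(z)$ and $U_k = \sup_z W_k(z)$ satisfy $U_k - L_k \le c$.

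With this conditional range control in hand, the remainder is the standard moment-generating-function argument. I would invoke Hoeffding's lemma conditionally: since $\mathbb{E}[D_k\mid\mathcal{F}_{k-1}]=0$ and $D_k$ lies in an interval of length at most $c$, we get $\mathbb{E}[e^{sD_k}\mid\mathcal{F}_{k-1}] \le e^{s^2c^2/8}$ for every $s$. Conditioning successively from $k=n$ down to $k=1$ and peeling off one factor at a time yields $\mathbb{E}[e^{s(V-\mathbb{E}V)}] \le e^{s^2nc^2/8}$. Markov's inequality then gives $\mathbb{P}(V - \mathbb{E}V \ge t) \le e^{-st+s^2nc^2/8}$, and optimizing over $s>0$ with the choice $s = 4t/(nc^2)$ produces the one-sided bound $\mathbb{P}(V-\mathbb{E}V\ge t)\le e^{-2t^2/(nc^2)}$.

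Finally I would apply the identical argument to $-f$, which has the same bounded-differences constant, to obtain the matching lower-tail bound, and combine the two by a union bound to get $\mathbb{P}(|V-\mathbb{E}V|\ge t) \le 2e^{-2t^2/(nc^2)}$. Setting the right-hand side equal to $\delta$ and solving for $t$ gives $t = c\sqrt{n\log(2/\delta)/2}$, which is exactly the claimed deviation. The only genuinely delicate point is the conditional-range step in the second paragraph; the MGF peeling and the final optimization over $s$ are routine.
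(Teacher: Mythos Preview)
Your proposal is a correct and standard proof of McDiarmid's inequality via the Doob martingale decomposition and the Azuma--Hoeffding route. However, the paper does not actually prove this lemma: it is merely stated as a classical result with a citation to \citet{mcdiarmid1989method} and then invoked in the proof of Theorem~\ref{thm:max_deviations}. So there is nothing to compare against; your argument simply supplies the full proof that the paper omits by reference.
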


\section{Proof of Lemma~\ref{lem:rademacher}}
\label{app:rad}

\begin{proof}
Given a training sample $S=\{\vct{z}_i=(\vct{x}_i,y_i) : i\in 1,\dots,n\}\sim \mu^n$, we denote the set of admissible triplets involved in the Rademacher complexity by
$$A_S = \left\{i: y_i=y_{i+{\lfloor n/3 \rfloor}}\neq y_{i+2\times {\lfloor n/3 \rfloor}}, i = 1,\dots, \lfloor n/3 \rfloor \right\},$$
and let $m=|A_S|\leq \lfloor n/3 \rfloor$. We have:
\begin{align}
R_n(\mathcal{F}^{(k)}) & = \mathbb{E}_{\vct{\sigma},S\sim\mu^n}\sup_{\mat{M}\in\mathcal{D}_\lambda^{(k)}}\frac{1}{\lfloor n/3 \rfloor}\sum_{i\in A_S} \sigma_i  \ell(\innerp{\vct{x}_{i}(\vct{x}_{i+{\lfloor n/3 \rfloor}}-\vct{x}_{i+2\times {\lfloor n/3 \rfloor}})\T,\mat{M}})\nonumber\\
&\leq \mathbb{E}_{\vct{\sigma},S\sim\mu^n}\sup_{\mat{M}\in\mathcal{D}_\lambda^{(k)}}\frac{1}{\lfloor n/3 \rfloor}\sum_{i\in A_S} \sigma_i  \innerp{\vct{x}_{i}(\vct{x}_{i+{\lfloor n/3 \rfloor}}-\vct{x}_{i+2\times {\lfloor n/3 \rfloor}})\T,\mat{M}}\label{eq:contract}\\
&= \frac{m}{\lfloor n/3 \rfloor}\mathbb{E}_{\vct{\sigma},S\sim\mu^n}\frac{1}{m}\sup_{\mat{M}\in\mathcal{D}_\lambda^{(k)}}\sum_{i\in A_S} \sigma_i  \innerp{\vct{x}_{i}(\vct{x}_{i+{\lfloor n/3 \rfloor}}-\vct{x}_{i+2\times {\lfloor n/3 \rfloor}})\T,\mat{M}}\nonumber\\
&\leq \frac{m}{\lfloor n/3 \rfloor}\max_{\vct{u}\in U}\|\vct{u}-\bar{\vct{u}}\|_2 \frac{\sqrt{2\log k}}{m}\label{eq:finiteset}\\
&= \frac{1}{\lfloor n/3 \rfloor}\max_{\vct{u}\in U}\|\vct{u}-\bar{\vct{u}}\|_2 \sqrt{2\log k}\nonumber\\
&\leq \frac{1}{\lfloor n/3 \rfloor}8\lambda B_{\mathcal{X}}\sqrt{m}\sqrt{2\log k}\label{eq:boundeddata}\\
&\leq 8\lambda B_{\mathcal{X}}\sqrt{\frac{2\log k}{\lfloor n/3 \rfloor}},\nonumber
\end{align}
where the set $U = \{\vct{u}_\tau\in \mathbb{R}^m: \tau=1,\dots,k, (\vct{u}_\tau)_i = \innerp{\vct{x}_{\gamma(i)}(\vct{x}_{{\gamma(i)}+{\lfloor n/3 \rfloor}}-\vct{x}_{{\gamma(i)}+2\times {\lfloor n/3 \rfloor}})^T,\mat{B}_\tau},  \gamma: \{1,\dots,m\} \rightarrow A_S\text{ is bijective}, \mat{B}_\tau \in \mathcal{B}_\lambda\}$, and $\bar{\vct{u}} = \frac{1}{k}\sum_{\tau=1}^k\vct{u}_\tau$. The inequality \eqref{eq:contract} follows from the contraction property \citep[see][Lemma 26.9]{Shalev-Shwartz2014a}. The inequality \eqref{eq:finiteset} follows from the fact $\mat{M}$ is a convex combination of set of $k$ bases combined with the properties in \citet[][Lemma 26.7, 26.8]{Shalev-Shwartz2014a}. Finally, inequality \eqref{eq:boundeddata} follows from the sparsity structure of the bases and the fact that $\vct{x}_i(\vct{x}_j - \vct{x}_k)^T$ has no entries with absolute value greater than $B_{\mathcal{X}}$.
\end{proof}

\section{Proof of Theorem~\ref{thm:max_deviations}}
\label{app:maxdev}

\begin{proof}
Let us consider the function
$$\Phi(S) = \sup_{\mat{M}\in\mathcal{D}_\lambda^{(k)}} [\mathcal{L}(\mat{M}) - \mathcal{L}_{S}(\mat{M}) ].$$

Let $S=\{\vct{z}_1,\dots,\vct{z}_{q-1},\vct{z}_q,\vct{z}_{q+1},\dots,\vct{z}_n\}$ and $S'=\{\vct{z}_1,\dots,\vct{z}_{q-1},\vct{z}'_q,\vct{z}_{q+1},\dots,\vct{z}_n\}$ be two samples differing by exactly one point. We have:

\begin{align*}
\Phi(S') - \Phi(S) &\leq \sup_{\mat{M}\in\mathcal{D}_\lambda^{(k)}} [\mathcal{L}_S(\mat{M}) - \mathcal{L}_{S'}(\mat{M}) ]\\
&\leq \frac{1}{n(n-1)(n-2)} \sup_{\mat{M}\in\mathcal{D}_\lambda^{(k)}} \sum_{i\neq j\neq k} |L_{\mat{M}}(\vct{z}_i,\vct{z}_j,\vct{z}_k) - L_{\mat{M}}(\vct{z}'_i,\vct{z}'_j,\vct{z}'_k)| \\
&\leq \frac{1}{n(n-1)(n-2)}6(n-1)(n-2)B_{\mathcal{X}}B_{\mathcal{D}_\lambda^{(k)}} = \frac{6}{n}B_{\mathcal{X}}B_{\mathcal{D}_\lambda^{(k)}}.
\end{align*}
The first inequality comes from the fact that the difference of suprema does not exceed the supremum of the difference. The last inequality makes use of \eqref{eq:holder}. Similarly, we can obtain $\Phi(S) - \Phi(S') \leq 6B_{\mathcal{X}}B_{\mathcal{D}_\lambda^{(k)}}/n$, thus we have $|\Phi(S) - \Phi(S')| \leq 6B_{\mathcal{X}}B_{\mathcal{D}_\lambda^{(k)}}/n$. We can therefore apply McDiarmid's inequality (see Lemma~\ref{lem:mcdiarmid} in \ref{app:tech}) to $\Phi(S)$: for any $\delta>0$, with probability at least $1-\delta$ we have:
\begin{equation}
\label{eq:eMcDiarmid}
\sup_{\mat{M}\in\mathcal{D}_\lambda^{(k)}}[\mathcal{L}(\mat{M}) -\mathcal{L}_S(\mat{M})] \leq \mathbb{E}_S\sup_{\mat{M}\in\mathcal{D}_\lambda^{(k)}}[\mathcal{L}(\mat{M}) -\mathcal{L}_S(\mat{M})] + 3B_{\mathcal{X}}B_{\mathcal{D}_\lambda^{(k)}}\sqrt{\frac{2\ln{(2/\delta)}}{n}}.
\end{equation}

We thus need to bound $\mathbb{E}_S\sup_{\mat{M}\in\mathcal{D}_\lambda^{(k)}}[\mathcal{L}(\mat{M}) -\mathcal{L}_S(\mat{M})]$.
Applying Lemma \ref{lem:UInequality} (see \ref{app:tech}) with $q_{\mat{M}}(\vct{z},\vct{z}',\vct{z}'') = \mathcal{L}(\mat{M}) - L_{\mat{M}}(\vct{z},\vct{z}',\vct{z}'')$ gives
\begin{equation*}
\mathbb{E}_S\sup_{\mat{M}\in\mathcal{D}_\lambda^{(k)}}[\mathcal{L}(\mat{M}) -\mathcal{L}_S(\mat{M})] \leq \mathbb{E}_S\sup_{\mat{M}\in\mathcal{D}_\lambda^{(k)}}[\mathcal{L}(\mat{M}) -\bar{\mathcal{L}}_S(\mat{M})],
\end{equation*}
where $\bar{\mathcal{L}}_S(\mat{M}) = \frac{1}{\lfloor n/3 \rfloor}\sum_{i=1}^{\lfloor n/3 \rfloor} L_{\mat{M}}(\vct{z}_{i},\vct{z}_{i+{\lfloor n/3 \rfloor}},\vct{z}_{i+2\times {\lfloor n/3 \rfloor}})$. Let $\bar{S} = \{\vct{\bar{z}}_1,...,\vct{\bar{z}}_n\}$ be an i.i.d. sample independent of $S$. Then
\begin{eqnarray*}
\mathbb{E}_S\sup_{\mat{M}\in\mathcal{D}_\lambda^{(k)}}[\mathcal{L}(\mat{M}) -\bar{\mathcal{L}}_S(\mat{M})] &=& \mathbb{E}_S\sup_{\mat{M}\in\mathcal{D}_\lambda^{(k)}}[\mathbb{E}_{\bar{S}}\bar{\mathcal{L}}_{\bar{S}}(\mat{M}) -\bar{\mathcal{L}}_S(\mat{M})] \\
&\leq &\mathbb{E}_{S,\bar{S}}\sup_{\mat{M}\in\mathcal{D}_\lambda^{(k)}}[\bar{\mathcal{L}}_{\bar{S}}(\mat{M}) -\bar{\mathcal{L}}_S(\mat{M})].
\end{eqnarray*}

Let $\sigma_1,\dots,\sigma_{\lfloor \frac{n}{3} \rfloor}\in\{-1,1\}$ be a collection of i.i.d. Rademacher variables. By standard symmetrization techniques, we have that
\if\arxiv1
\begin{equation*}
\begin{aligned}
&\mathbb{E}_{S,\bar{S}}\sup_{\mat{M}\in\mathcal{D}_\lambda^{(k)}}[\bar{\mathcal{L}}_{\bar{S}}(\mat{M}) -\bar{\mathcal{L}}_S(\mat{M})]\\
&= \mathbb{E}_{\vct{\sigma}, S,\bar{S}}\frac{1}{\lfloor n/3 \rfloor}\sup_{\mat{M}\in\mathcal{D}_\lambda^{(k)}}\sum_{i=1}^{\lfloor n/3 \rfloor} \sigma_i \big[ L_{\mat{M}}(\bar{\vct{z}}_{i},\bar{\vct{z}}_{i+{\lfloor n/3 \rfloor}},\bar{\vct{z}}_{i+2\times {\lfloor n/3 \rfloor}}) - L_{\mat{M}}(\vct{z}_{i},\vct{z}_{i+{\lfloor n/3 \rfloor}},\vct{z}_{i+2\times {\lfloor n/3 \rfloor}})\big]\\
&\leq \frac{1}{\lfloor n/3 \rfloor} [ \mathbb{E}_{\vct{\sigma}, \bar{S}} \sup_{\mat{M}\in\mathcal{D}_\lambda^{(k)}} \sum_{i=1}^{\lfloor n/3 \rfloor} \sigma_i L_{\mat{M}}(\bar{\vct{z}}_{i},\bar{\vct{z}}_{i+{\lfloor n/3 \rfloor}},\bar{\vct{z}}_{i+2\times {\lfloor n/3 \rfloor}}) + \mathbb{E}_{\vct{\sigma}, S} \sup_{\mat{M}\in\mathcal{D}_\lambda^{(k)}} \sum_{i=1}^{\lfloor n/3 \rfloor} \sigma_i L_{\mat{M}}(\vct{z}_{i},\vct{z}_{i+{\lfloor n/3 \rfloor}},\vct{z}_{i+2\times {\lfloor n/3 \rfloor}}) ]\\
&= 2\mathbb{E}_{\vct{\sigma}, S}\frac{1}{\lfloor n/3 \rfloor}\sup_{\mat{M}\in\mathcal{D}_\lambda^{(k)}}\sum_{i=1}^{\lfloor n/3 \rfloor} \sigma_i  L_{\mat{M}}({\vct{z}}_{i},{\vct{z}}_{i+{\lfloor n/3 \rfloor}},{\vct{z}}_{i+2\times {\lfloor n/3 \rfloor}}) = 2R_n(\mathcal{F}^{(k)}).
\end{aligned}
\end{equation*}
\else
\begin{equation*}
\begin{aligned}
&\mathbb{E}_{S,\bar{S}}\sup_{\mat{M}\in\mathcal{D}_\lambda^{(k)}}[\bar{\mathcal{L}}_{\bar{S}}(\mat{M}) -\bar{\mathcal{L}}_S(\mat{M})]\\
&= \mathbb{E}_{\vct{\sigma}, S,\bar{S}}\frac{1}{\lfloor n/3 \rfloor}\sup_{\mat{M}\in\mathcal{D}_\lambda^{(k)}}\sum_{i=1}^{\lfloor n/3 \rfloor} \sigma_i \big[ L_{\mat{M}}(\bar{\vct{z}}_{i},\bar{\vct{z}}_{i+{\lfloor n/3 \rfloor}},\bar{\vct{z}}_{i+2\times {\lfloor n/3 \rfloor}})\\
&~~~- L_{\mat{M}}(\vct{z}_{i},\vct{z}_{i+{\lfloor n/3 \rfloor}},\vct{z}_{i+2\times {\lfloor n/3 \rfloor}})\big]\\
&\leq \frac{1}{\lfloor n/3 \rfloor} [ \mathbb{E}_{\vct{\sigma}, \bar{S}} \sup_{\mat{M}\in\mathcal{D}_\lambda^{(k)}} \sum_{i=1}^{\lfloor n/3 \rfloor} \sigma_i L_{\mat{M}}(\bar{\vct{z}}_{i},\bar{\vct{z}}_{i+{\lfloor n/3 \rfloor}},\bar{\vct{z}}_{i+2\times {\lfloor n/3 \rfloor}})\\
&~~~+ \mathbb{E}_{\vct{\sigma}, S} \sup_{\mat{M}\in\mathcal{D}_\lambda^{(k)}} \sum_{i=1}^{\lfloor n/3 \rfloor} \sigma_i L_{\mat{M}}(\vct{z}_{i},\vct{z}_{i+{\lfloor n/3 \rfloor}},\vct{z}_{i+2\times {\lfloor n/3 \rfloor}}) ]\\
&= 2\mathbb{E}_{\vct{\sigma}, S}\frac{1}{\lfloor n/3 \rfloor}\sup_{\mat{M}\in\mathcal{D}_\lambda^{(k)}}\sum_{i=1}^{\lfloor n/3 \rfloor} \sigma_i  L_{\mat{M}}({\vct{z}}_{i},{\vct{z}}_{i+{\lfloor n/3 \rfloor}},{\vct{z}}_{i+2\times {\lfloor n/3 \rfloor}}) = 2R_n(\mathcal{F}^{(k)}).
\end{aligned}
\end{equation*}
\fi

We have thus shown:
\begin{equation}
\label{eq:derivRad}
\mathbb{E}_S\sup_{\mat{M}\in\mathcal{D}_\lambda^{(k)}}[\mathcal{L}(\mat{M}) -\mathcal{L}_S(\mat{M})] \leq 2R_n(\mathcal{F}^{(k)}).
\end{equation}
Plugging \eqref{eq:derivRad} into \eqref{eq:eMcDiarmid} and using Lemma~\ref{lem:rademacher}, we get the desired result.
\end{proof}

\section{Proof of Corollary~\ref{cor:excess_risk}}
\label{app:excess}

\begin{proof}
The excess risk of $\mat{M}^{(k)}$ with respect to $\mat{M}^*$ can be decomposed as follows:
\begin{multline}
\mathcal{L}(\mat{M}^{(k)}) - \mathcal{L}(\mat{M}^*) = \mathcal{L}(\mat{M}^{(k)}) - \mathcal{L}_S(\mat{M}^{(k)}) + \mathcal{L}_S(\mat{M}^{(k)}) - \mathcal{L}_S(\mat{M}_S)\\
+ \mathcal{L}_S(\mat{M}_S) - \mathcal{L}_S(\mat{M}^*) + \mathcal{L}_S(\mat{M}^*) - \mathcal{L}(\mat{M}^*)\\
\leq \underbrace{\mathcal{L}(\mat{M}^{(k)}) - \mathcal{L}_S(\mat{M}^{(k)})}_{\text{generalization error}} + \underbrace{\mathcal{L}_S(\mat{M}^{(k)}) - \mathcal{L}_S(\mat{M}_S)}_{\text{optimization error}} + \mathcal{L}_S(\mat{M}^*) - \mathcal{L}(\mat{M}^*),\label{eq:decompo}
\end{multline}
where $\mat{M}_S \in \argmin_{\mat{M}\in\mathcal{D}_\lambda} \mathcal{L}_S(\mat{M})$ is an empirical risk minimizer.

The generalization error term in \eqref{eq:decompo} can be bounded using Theorem~\ref{thm:max_deviations} (recalling that $\mat{M}^{(k)}\in\mathcal{D}_\lambda^{(k)}$ by construction), while the optimization error term is bounded by the convergence rate of our Frank-Wolfe algorithm (Proposition~\ref{prop:converge}). In the last term, $\mat{M}^*$ does not depend on $S$, hence we can use Hoeffding's inequality together with \eqref{eq:holder} and \eqref{eq:l1_bound} to obtain that for any $\delta>0$, with probability at least $1-\delta/2$:
$$\mathcal{L}_S(\mat{M}^*) - \mathcal{L}(\mat{M}^*) \leq B_{\mathcal{X}}B_{\mathcal{D}_\lambda^{(k)}}\sqrt{\frac{\log(4/\delta)}{2n}}.$$
We get the corollary by combining the above results using the union bound.
\end{proof}

\end{document}